\documentclass[letterpaper]{article} 
\usepackage[preprint]{aaai2027}  
\usepackage[hyphens]{url}  
\usepackage{graphicx} 
\usepackage{natbib}  
\usepackage{caption} 
\usepackage{multirow}
\usepackage{algorithm}
\usepackage{algorithmic}

\newcommand{\green}[1]{#1}
\newcommand{\bon}[1]{#1}

\usepackage{newfloat}
\usepackage{listings}
\DeclareCaptionStyle{ruled}{labelfont=normalfont,labelsep=colon,strut=off} 
\floatstyle{ruled}
\newfloat{listing}{tb}{lst}{}
\floatname{listing}{Listing}

\usepackage{booktabs}

\usepackage{amsmath}
\usepackage{amssymb}
\usepackage{amsthm}
\usepackage{mathtools}
\usepackage{xcolor}
\usepackage{colortbl}
\usepackage{pifont}
\usepackage{framed}
\usepackage{tikz}
\usetikzlibrary{arrows.meta,positioning,fit,backgrounds,calc,shapes.geometric}
\graphicspath{{../results/M_sweep/figures/}{../results/figures/}{./}}
\newcommand{\xt}{x_t}
\newcommand{\xs}{x_s}
\newcommand{\Psith}{\Psi^\theta_{t\to s}}
\newcommand{\wk}{w^\theta_k}
\newcommand{\Pik}{P^\theta_i}
\newcommand{\Mask}{\texttt{[MASK]}}
\newcommand{\I}{\mathrm{I}}
\newcommand{\TC}{\mathrm{TC}}
\newcommand{\HH}{\mathrm{H}}
\newcommand{\E}{\mathbb{E}}
\newcommand{\R}{\mathbb{R}}

\newcommand{\Ihat}{\widehat{\I}_k}

\newcommand{\TChat}{\widehat{\TC}}
\DeclareMathOperator*{\logsumexp}{logsumexp}
\DeclareMathOperator*{\softmax}{softmax}

\newtheorem{theorem}{Theorem}
\newtheorem{proposition}{Proposition}
\newtheorem{lemma}{Lemma}
\newtheorem{corollary}{Corollary}

\newtheorem{remark}{Remark}

\definecolor{shadecolor}{gray}{0.93}
\let\origtheorem\theorem     \let\origendtheorem\endtheorem
\renewenvironment{theorem}{\begin{snugshade}\origtheorem}{\origendtheorem\end{snugshade}}
\let\origproposition\proposition \let\origendproposition\endproposition
\renewenvironment{proposition}{\begin{snugshade}\origproposition}{\origendproposition\end{snugshade}}
\let\origlemma\lemma         \let\origendlemma\endlemma
\renewenvironment{lemma}{\begin{snugshade}\origlemma}{\origendlemma\end{snugshade}}
\let\origcorollary\corollary \let\origendcorollary\endcorollary
\renewenvironment{corollary}{\begin{snugshade}\origcorollary}{\origendcorollary\end{snugshade}}

\newenvironment{gensample}[1]%
{%
 \MakeFramed{\advance\hsize-\width\FrameRestore}%
 \noindent\small{\textbf{#1}}\par\nobreak\smallskip\noindent}%
{\endMakeFramed}

\title{Latent-Kernel Discrete Flow Maps for Few-Step Generation}

\title{Latent-Kernel Discrete Flow Maps for Few-Step Generation}
\author{
    Mansoor Ahmed\textsuperscript{\rm 1,2}\corresponding,
    Yue-Tsz Fan\textsuperscript{\rm 2},
    Hemanth Venkateswara\textsuperscript{\rm 1},
    Murray Patterson\textsuperscript{\rm 1}\corresponding
}
\affiliations{
    \textsuperscript{\rm 1}Georgia State University, Atlanta, GA, USA\\
    \textsuperscript{\rm 2}Georgia Institute of Technology, Atlanta, GA, USA\\
    mahmed76@student.gsu.edu, mpatterson30@gsu.edu
}

\begin{document}

\maketitle

\begin{abstract}
Discrete diffusion and flow-matching models denoise a sequence over many steps, but to keep each step cheap, they factorize the transition across positions and decide every token \textit{independently}. This makes few-step generation challenging for text when the target couples two positions, such as a subject and a verb that must agree. An independent update commits to them separately, and many function evaluations are spent repairing the mismatch. Existing few-step methods buy back the lost correlation by distilling or rectifying a slow teacher, and so inherit the teacher's quality ceiling. We ask instead \textit{whether a model can express correlated steps natively}, and answer with Latent-Kernel Discrete Flow Maps (LKF), \textit{a from-scratch flow-map kernel that is a mixture of} $M$ \textit{factorized components tied by a single shared latent}. Conditioned on the latent, each component is cheap, and the mixture is summed over the latent in closed form for small $M$. We show that a single step places mass on correlated completions with the same sampling time complexity as a factorized model, since one latent is drawn per sequence and reused across the entire denoising trajectory. We also show that the Masked Diffusion Language Model (MDLM) is a special case of our LKF model at $M{=}1$. The experiments for unconditional text generation on the One-Billion-Word (LM1B) and WikiText-103 benchmarks show that our LKF model learns strongly heterogeneous components and improves generative perplexity by $2.1\times$ to $3.3\times$ over the likelihood baselines without losing diversity. The gain grows with $M$, and at $M{=}8$, it surpasses distilled and rectified few-step samplers. The source code is available at: \url{https://github.com/mansoor181/lkf.git}

\end{abstract}


\section{Introduction}

Discrete diffusion and flow-matching models now rival autoregressive language models, as they fill
every position of a sequence in parallel rather than one token at a
time~\cite{potaptchik2026dfm}. To keep a single denoising step cheap, they factorize the per-step
transition across positions,
\begin{equation}
p_{t\to s}(\xs\mid \xt)\;=\;\prod_{i=1}^{L} p_i(\xs^i\mid \xt),
\label{eq:factorized}
\end{equation}
and decide every token \textit{independently} given the current sequence
$\xt$~\cite{sahoo2024mdlm}. This independence is what
makes a step cheap, and it is also what makes few-step generation challenging when the target
couples positions that must be decided together. If a sentence pairs a subject with a verb that
must agree in number, or an opening bracket with its closing match, the correct tokens are
correlated, and an independent update commits to them separately.
{A factorized step may match the individual token marginals, while assigning
substantial probability to jointly inconsistent combinations, and additional denoising steps may
therefore be needed to resolve the mismatch.}

\begin{figure*}[ht!]
\centering
\begin{tikzpicture}[
  ctx/.style={draw,gray!120,rounded corners=3pt,fill=gray!8,
              minimum height=6mm,inner xsep=7pt,font=\small},
  marg/.style={draw,violet!60!black,rounded corners=3pt,fill=violet!12,
              minimum height=6mm,minimum width=30mm,inner xsep=5pt,font=\small},
  lat/.style={draw,orange!75!black,rounded corners=3pt,fill=orange!18,
              minimum height=6mm,minimum width=30mm,inner xsep=5pt,font=\small,align=center},
  bad/.style={draw,red!65!black,rounded corners=3pt,fill=red!10,
              minimum height=6mm,minimum width=30mm,inner xsep=5pt,font=\small},
  good/.style={draw,green!45!black,rounded corners=3pt,fill=green!12,
              minimum height=6mm,minimum width=30mm,inner xsep=5pt,font=\small},
  lab/.style={font=\scriptsize,gray!40!black},
  tl/.style={font=\small\bfseries},
  flow/.style={-{Stealth[length=2mm]},gray!60!black,semithick},
]
\node[tl,anchor=west] at (0,3.1) {(a) Factorized step (\textit{other methods})};
\node[ctx]  (ctxa) at (3.3,2.4)  {The \texttt{[Mask$_1$]} \texttt{[Mask$_2$]} loudly.};
\node[marg] (m1)   at (1.7,1.15) {dog .5 $|$ dogs .5};
\node[marg] (m2)   at (4.9,1.15) {barks .5 $|$ bark .5};
\draw[flow] (ctxa.south -| m1) -- (m1.north);
\draw[flow] (ctxa.south -| m2) -- (m2.north);
\node[lab] at (3.3,1.75) {independent draws};
\node[good] (oa1) at (1.7,0.0)   {dog barks \ding{51} $\tfrac14$};
\node[bad]  (oa2) at (4.9,0.0)   {dog bark \ding{55} $\tfrac14$};
\node[bad]  (oa3) at (1.7,-0.85) {dogs barks \ding{55} $\tfrac14$};
\node[good] (oa4) at (4.9,-0.85) {dogs bark \ding{51} $\tfrac14$};
\draw[flow] (m1.south) -- (oa1.north);
\draw[flow] (m2.south) -- (oa2.north);
\node[lab,align=center] at (3.3,-1.7)
  {$P(\text{agree})=\tfrac12$ in one factorized draw\\
   half of the joint mass is inconsistent};
\draw[gray!40,thin] (6.9,3.3) -- (6.9,-2.1);
\begin{scope}[shift={(7.5,0)}]
\node[tl,anchor=west] at (0,3.1) {(b) LKF step (\textit{ours}), $M=2$};
\node[ctx] (ctxb) at (3.3,2.4) {The \texttt{[Mask$_1$]} \texttt{[Mask$_2$]} loudly.};
\node[lat] (k1) at (1.7,1.1) {$\green{k=1}$: \emph{singular}\\ $w_1=.5$};
\node[lat] (k2) at (4.9,1.1) {$\green{k=2}$: \emph{plural}\\ $w_2=.5$};
\draw[flow] (ctxb.south -| k1) -- (k1.north);
\draw[flow] (ctxb.south -| k2) -- (k2.north);
\node[lab] at (3.3,1.9) {$\green{k\sim w(\xt)}$, drawn once};
\node[marg] (c1) at (1.7,0.0)  {dog 1 $\cdot$ barks 1};
\node[marg] (c2) at (4.9,0.0)  {dogs 1 $\cdot$ bark 1};
\draw[flow] (k1.south) -- (c1.north);
\draw[flow] (k2.south) -- (c2.north);
\node[good] (ob1) at (1.7,-0.85) {dog barks \ding{51} $\tfrac12$};
\node[good] (ob2) at (4.9,-0.85) {dogs bark \ding{51} $\tfrac12$};
\draw[flow] (c1.south) -- (ob1.north);
\draw[flow] (c2.south) -- (ob2.north);
\node[lab] at (3.3,-1.7) {$P(\text{agree})=1$ in one transition};
\end{scope}
\end{tikzpicture}
\caption{Two masked tokens must agree in number, and their correct completions are
therefore correlated. \textbf{(a)} A factorized transition (Eq.~\ref{eq:factorized}) samples
both positions independently from $.5/.5$ marginals. Its outer-product joint assigns
probability $\tfrac14$ to each completion, and only half of its mass therefore lies on
agreement outcomes. \textbf{(b)} An LKF transition (Eq.~\ref{eq:kernel}) with $M{=}2$ draws a
shared latent $\green{k\sim w(\cdot\mid\xt,t,s)}$. Conditioned on $\green{k}$, the positions remain
independent, but each component concentrates on a consistent completion. Marginalizing
$\green{k}$ preserves the same per-position marginals as in (a), while placing all joint mass on
the agreement set. Thus, two rank-one components express the correlated completion in
one transition.}
\label{fig:concept}
\end{figure*}

Currently, two model families dominate this trade-off. \textbf{Likelihood models} such as
MDLM~\citep{sahoo2024mdlm}
and SEDD~\citep{lou2023sedd} train a strong factorized denoiser from scratch and then spend many
steps at inference, where the factorization assumption is least harmful. \textbf{Few-step
accelerators} take a trained teacher and compress its many-step behaviour into a handful of steps,
either by \emph{distillation} (Di4C~\citep{hayakawa2024di4c}, SDTT~\citep{deschenaux2025sdtt}) or by
\emph{coupling rectification} (ReDi~\citep{yoo2026redi}). Both methods are effective, but they
inherit the teacher's quality ceiling. It is well understood that a student cannot outrun the
teacher it imitates~\cite{zhang2025can}, and rectification degrades after a few rounds. Hence,
neither family removes the factorization at its root.

We ask whether correlated steps can be modeled natively without a teacher. Our answer is
\textbf{Latent-Kernel Discrete Flow Maps} (LKF), whose finite-time kernel is a mixture of
$M$ factorized components selected by a latent $\green{k\in[M]}$:
\begin{equation}
\Psith(\xs\mid \xt)
=
\sum_{k=1}^{M}\wk(\xt,t,s)
\prod_{i=1}^{L}\Pik(\xs^i\mid \xt,k,t,s).
\label{eq:kernel}
\end{equation}
Conditioned on $(\xt,t,s)$, we first sample
$\green{k\sim w(\cdot\mid\xt,t,s)}$, and we then sample the coordinates of $X_s$
independently given $\green{k}$. Thus, $\green{k}$ couples positions after marginalization,
while each component remains factorized. At $M{=}1$, LKF reduces to MDLM.
This mechanism produces a teacher-free few-step gain.
On the One-Billion-Word (LM1B) and WikiText-103 benchmarks, the mixture learns
heterogeneous components, and a best-of-$M$ decode that ranks those components by the model's own
likelihood turns them into a few-step gain. 


Our main contributions are as follows.
\begin{enumerate}
\item We introduce a latent-kernel discrete flow-map (LKF) whose step is a mixture of
$M$ factorized components tied by one shared latent.
\item We give an exact information decomposition of the
correlation a rank-$M$ step can carry, turning the question of whether the
latent captures correlation into two bounded, estimator-validated quantities.
\item We test LKF on synthetic datasets with closed-form total correlation (TC). A
\textit{hidden-agreement} dataset shows that the captured information and the held-out
likelihood grow monotonically with $M$, while a \textit{parity control} shows that the latent
provably cannot help.
\item We evaluate unconditional generation on the LM1B and
WikiText-103 benchmarks against likelihood and accelerator models. At $M{=}8$, LKF improves
few-step generative perplexity by roughly $3\times$ over the likelihood baselines and surpasses
the distilled and rectified accelerators without using any teacher, with
the gain growing with $M$.
\end{enumerate}


\section{Related work}
\label{sec:related}

\textbf{Masked diffusion as likelihood models.} Building on the absorbing D3PM
process~\citep{austin2021d3pm}, MDLM~\citep{sahoo2024mdlm} shows that the absorbing evidence lower
bound (ELBO) reduces to a weighted cross-entropy on masked positions under the SUBS
parameterization, which zeroes the mask probability and copies revealed tokens through, and
SEDD~\citep{lou2023sedd} reaches comparable likelihoods by learning discrete score ratios for a
continuous-time Markov chain~\citep{campbell2022ctmc}. Both are factorized per step, where a rate
changes one coordinate at a time in SEDD, and the reverse kernel is a product over positions in
MDLM. Hence, both need many evaluations before the sample quality saturates. LKF reduces to MDLM
as its $M{=}1$ special case (Proposition~\ref{prop:contain}) by adding intra-step correlation.

\textbf{Few-step generation by distillation and rectification.}
SDTT~\citep{deschenaux2025sdtt} self-distills a masked model through time, and
Di4C~\citep{hayakawa2024di4c} distills into a mixture-of-products student, proving that
factorized students face a $\Theta(1/J)$ total-variation floor.
Hence, both methods are structurally teacher-bounded. ReDi~\citep{yoo2026redi} instead rectifies
the source-to-target coupling, provably decreasing the same conditional total correlation per
round, but its gains ride on a monotone entropy collapse. LKF asks the complementary question, whether the
mixture structure can be trained from scratch with no teacher ceiling, enriching the kernel until
the correlation is representable.

\textbf{Flow maps and from-scratch few-step models.} Flow maps learn a finite-time
transition kernel indexed by two times, where one network serves every step. Building on the
discrete flow matching family~\citep{gat2024dfm,campbell2024dfm}, Discrete Flow
Maps~\citep{potaptchik2026dfm} are the closest from-scratch sibling that carries
correlation through position-dependent simplex schedules, but with no latent and no mixture.
Concurrent theory quantifies the schedule-level tradeoff between total correlation and step
count~\citep{dmitriev2026sampling,zhao2026adaptation}, complementing our per-step accounting. LKF is a likelihood model and
differs by making each jump a low-rank mixture of discrete-token kernels whose latent is
explicit and measurable, a mechanism orthogonal to simplex schedules.

\section{Preliminaries and background}
\label{sec:background}

\textbf{Setup.} We model sequences $x=(x^1,\dots,x^L)$ whose tokens take one of $V$ values
from a vocabulary $\mathcal V$, and corrupted sequences additionally use an absorbing symbol
$\Mask\notin\mathcal V$. Data come from an unknown $p_{\text{data}}$ accessed only through
samples, and we pursue both generation and density evaluation on held-out data. Superscripts
index positions ($x^i$), and subscripts index time ($\xt$), with all notations collected in
Appendix~\ref{app:notation}. Samples are generated by iterative
denoising~\citep{austin2021d3pm}. Starting from all-$\Mask$, a learned transition removes a
little corruption per step, where each step is one network forward pass (NFE).

\textbf{Masked diffusion.} A monotone schedule $\kappa_t\in[0,1]$ keeps each token
independently with probability $\kappa_t$ and otherwise replaces it by $\Mask$~\citep{austin2021d3pm}
(Eq.~\eqref{eq:forward}). Masking is also called absorbing, where the reverse process turns $\Mask$ into
real tokens. A denoiser takes the whole partially masked sequence and predicts a categorical over
its clean value for each masked position, factorized across positions as in
Eq.~\eqref{eq:factorized}. MDLM~\citep{sahoo2024mdlm} and absorbing SEDD~\citep{lou2023sedd} are
such examples. The endpoint coupling is trivial ($\delta_{\Mask}\otimes p_{\text{data}}$), and
the only modelling freedom left is therefore the \emph{order and grouping} in which positions are
revealed, which the LKF latent parameterizes. Factorization is also what forces many steps.
Correlated positions revealed in the
same step are sampled independently and frequently disagree, and the process can repair the
inconsistency only by revealing them in many steps.

\textbf{Flow maps.} Discrete flow and diffusion models come in two flavours. \emph{Rate-based}
models (such as SEDD and CTMC discrete diffusion~\citep{campbell2022ctmc}) learn an instantaneous
transition rate and integrate it with a fine discretization at inference. \emph{Flow-map}
models learn a \emph{finite-time} kernel $\Psith(\xs\mid\xt)$ that jumps from $t$ to $s$ directly,
building on the discrete flow matching family~\citep{gat2024dfm,campbell2024dfm}. LKF is a flow-map
model, and MDLM is its special case with one component
(Proposition~\ref{prop:contain}, Appendix~\ref{app:objx0}).

\textbf{Total correlation.} Correlation is measured by the \emph{total correlation (TC)} of a joint
distribution over the $L$ positions,
\begin{equation}
\TC(X)\;=\;\sum_{i=1}^{L}\HH(X^i)\;-\;\HH(X)\;\ge\;0,
\label{eq:tc_def}
\end{equation}
the amount by which the joint exceeds the product of its marginals. It vanishes when the positions
are independent, and it is precisely what a factorized model discards.

\textbf{Why kernels and not rates.} Mixing several factorized \emph{rate} matrices does not
buy correlation. Convex combinations of coordinatewise CTMC generators are again coordinatewise,
so the mixed chain still changes one coordinate at a time and its short-time kernel is factorized
to $O(\varepsilon^2)$ (Proposition~\ref{prop:rates}, Appendix~\ref{proof:rates}). Correlated transitions exist
only in \emph{discrete-time} kernels or flow maps, which is why LKF is defined as a
mixture kernel (Eq.~\eqref{eq:kernel}) rather than a mixture of rates.

\section{Method}
\label{sec:method}

\subsection{The latent-kernel flow map (LKF)}

LKF parameterizes Eq.~\eqref{eq:kernel} with a single network that, given $(\xt,t,s)$, emits
(i) per-latent per-position token logits $\Pik(\cdot\mid \xt,k,t,s)$ and (ii) router logits
$\wk(\xt,t,s)$. Sampling one step draws a \emph{single} $k\sim w(\xt)$ shared across all
positions, and then samples each token from component $k$. The shared $k$ is the carrier of
intra-step correlation in LKF, and Figure~\ref{fig:sampling} in Appendix~\ref{app:arch}
illustrates this on a worked example.

\paragraph{Masking path and two-time transition}

We use a linear schedule $\kappa_t=t$ (data at $t{=}1$, all-\Mask{} at $t{=}0$). Forward noising
keeps each token with probability $\kappa_t$:
\begin{equation}
\xt^i=\begin{cases} x_1^i & \text{w.p. } \kappa_t,\\ \Mask & \text{w.p. } 1-\kappa_t.\end{cases}
\label{eq:forward}
\end{equation}
The \emph{two-time} training target for $0\le t<s\le 1$ reveals each still-masked position to its
data value with probability $p_{\text{rev}}=\frac{(\kappa_s-\kappa_t)}{(1-\kappa_t)}$ and leaves already
revealed positions unchanged:
\begin{equation}
\xs^i=\begin{cases}
\xt^i & \text{if } \xt^i\neq\Mask \quad(\text{identity}),\\
x_1^i & \text{if } \xt^i=\Mask,\ \text{w.p. } p_{\text{rev}},\\
\Mask & \text{if } \xt^i=\Mask,\ \text{w.p. } 1-p_{\text{rev}}.
\end{cases}
\end{equation}
The training loss counts only
masked positions because the kept positions transition deterministically ($\log 1=0$), and no gradient is spent on the copies.

\subsection{Architecture overview}

Figure~\ref{fig:lkfoverview} illustrates the training and inference process in LKF.
The network (Fig.~\ref{fig:arch} in Appendix~\ref{app:arch}) is a single transformer
trunk as in the diffusion transformer (DiT)~\cite{peebles2023dit}, and it outputs both the router
weights $w(\xt,t,s)\in\Delta^{M-1}$ and the full stack of per-latent factorized logits
$\{\Pik(\cdot\mid\xt,k,t,s)\}$ in one forward pass. Concretely, the network computes a tensor of
logits of shape $[B,M,L,V]$ (batch, latents, positions, vocab), router logits $[B,M]$, and
the slice $[:,k,:,:]$ (the factorized denoiser of component $k$).

Three ideas distinguish it from a vanilla masked DiT. The \textbf{two-time
conditioning} lets \emph{both} endpoints $t$ and $s$ modulate every block, which lets the same
weights realize a finite-time flow map for any step size. The \textbf{late latent injection} runs
the first $\text{depth}-L_k$ blocks once and replicates only the last
$L_k$ blocks over the $M$ latents, which brings the cost of marginalization down to
$M\,L_k/\text{depth}$ rather than $M\times$ the whole network. \textbf{A pooled router} reads the
shared representation and emits one mixture weight per sequence.

\begin{figure*}[t]
\centering
\resizebox{\textwidth}{!}{%
\begin{tikzpicture}[
  box/.style={draw,rounded corners,align=center,inner sep=3pt,font=\small,minimum height=7mm},
  io/.style={box,fill=gray!12},
  cnd/.style={box,fill=green!8},
  trunk/.style={box,fill=blue!10,minimum width=34mm,minimum height=11mm,font=\bfseries},
  head/.style={box,fill=orange!14,minimum width=12mm,minimum height=8mm},
  headsel/.style={head,fill=orange!48,very thick},
  comb/.style={box,fill=purple!8,align=center,minimum width=48mm},
  loss/.style={box,fill=red!12,align=center},
  wnode/.style={draw,circle,fill=green!25,inner sep=0.5pt,font=\scriptsize,minimum size=5.4mm},
  wsel/.style={wnode,fill=green!55,very thick},
  ar/.style={-{Latex[length=2mm]},thick},
  aro/.style={-{Latex[length=2mm]},very thick,orange!72!black},
  lbl/.style={font=\scriptsize\itshape,text=black!62,align=center},
  tl/.style={font=\small\bfseries},
]
\begin{scope}
\node[tl,anchor=west] at (-3.2,4.4) {(a) Training (one $t\!\to\!s$ step): exact-mixture NLL of the target $\xs$};

\node[io] (x1) at (0,3.55) {$x_1\sim p_{\text{data}}$};
\node[io] (xt) at (0,2.5) {$\xt$ (masked)};
\draw[ar] (x1)--(xt) node[midway,right=1pt,lbl]{forward noise};
\node[io] (xs) at (3.4,2.5) {$\xs$ target};
\draw[ar] (x1.east) to[out=0,in=90] (xs.north) node[midway,above=1pt,lbl]{};

\node[cnd] (cond) at (-3.0,1.3) {$t,s$};
\node[trunk] (dit) at (0,1.3) {DiT trunk};
\draw[ar] (xt)--(dit);
\draw[ar] (cond)--(dit);

\foreach \k/\x in {1/-2.55,2/-0.85,3/0.85,4/2.55}{
  \node[wnode] (w\k) at (\x,-0.1) {$w_{\k}$};
  \node[head]  (h\k) at (\x,-1.2) {$P^{(\k)}$};
  \draw[ar] (w\k) -- (h\k);
}
\draw[ar] (dit.south -| w2) -- (w2);
\draw[ar] (dit.south -| w3) -- (w3);
\draw[ar] (dit.south west) to[out=-90,in=90] (w1);
\draw[ar] (dit.south east) to[out=-90,in=90] (w4);
\node[lbl,align=center] at (-4.05,-0.1) {router\\ weights $w$};
\node[lbl,align=center] at (-4.05,-1.2) {$M{=}4$\\ factorized\\ heads};

\node[comb] (comb) at (0,-2.5)
  {$\log\Psith(\xs\!\mid\!\xt)$};
\draw[ar] (h2) -- (h2 |- comb.north);
\draw[ar] (h3) -- (h3 |- comb.north);
\draw[ar] (h1) to[out=-90,in=90] ([xshift=-16mm]comb.north);
\draw[ar] (h4) to[out=-90,in=90] ([xshift=16mm]comb.north);
\draw[ar,rounded corners=3pt] (xs.south) |- (comb.east);

\node[loss] (loss) at (0,-4) {$\mathcal{L}=-\log\Psith(\xs\!\mid\!\xt)\;+\;$ router reg};
\draw[ar] (comb)--(loss);
\end{scope}

\begin{scope}[shift={(11.6,0)}]
\node[tl,anchor=west] at (-3.2,4.4) {(b) Inference: {$J$ ancestral flow-map steps}, $t{=}0\!\to\!1$};

\node[io] (bxt) at (0,2.5) {$\xt$\ \ (start all-\Mask{}, $t{=}0$)};
\node[cnd] (bcond) at (-3.0,1.3) {$t,s$};
\node[trunk] (bdit) at (0,1.3) {DiT trunk};
\draw[ar] (bxt)--(bdit);
\draw[ar] (bcond)--(bdit);

\foreach \k/\x in {1/-2.55,2/-0.85,3/0.85,4/2.55}{
  \node[wnode] (bw\k) at (\x,-0.1) {$w_{\k}$};
  \node[head]  (bh\k) at (\x,-1.2) {$P^{(\k)}$};
  \draw[ar] (bw\k) -- (bh\k);
}
\draw[ar] (bdit.south -| bw2) -- (bw2);
\draw[ar] (bdit.south -| bw3) -- (bw3);
\draw[ar] (bdit.south west) to[out=-90,in=90] (bw1);
\draw[ar] (bdit.south east) to[out=-90,in=90] (bw4);
\node[wsel]    (bwsel) at (0.85,-0.1) {$w_{3}$};
\node[headsel] (bsel)  at (0.85,-1.2) {$P^{(3)}$};
\draw[aro] (bwsel) -- (bsel);
\node[lbl,align=center] at (-3.5,-0.1) {draw one\\ $k\sim w$};

\node[comb] (bsamp) at (0,-2.5) {sample $\xs\sim\prod_i P^{(k)}_i(\cdot\mid\xt,k)$};
\draw[aro] (bsel) -- (bsamp);

\node[io] (bnext) at (0,-4) {$\xs$};
\draw[ar] (bsamp)--(bnext);
\node[io] (bfin) at (-3.1,-4) {$x_1$ sample};
\draw[ar] (bnext) -- (bfin) node[midway,above=1pt,lbl]{at $t{=}1$};
\draw[ar] (bnext.east) to[out=0,in=0,looseness=1.5]
  node[midway,right=1pt,lbl,align=center]{$\xs\!\to\!\xt$\\ repeat $\green{J}\times$} (bxt.east);
\end{scope}
\end{tikzpicture}%
}
\caption{Illustration of training and inference in LKF with $M{=}4$ latent
components. \textbf{(a) Training.} A clean $x_1$ is corrupted to $\xt$ and
to the target $\xs$ by the two-time reveal. The Diffusion Transformer (DiT) trunk reads $\xt$ with
two-time conditioning $(t,s)$ and emits the router weights $w=(w_1,\dots,w_4)$ and the four
factorized denoisers $P^{(k)}=\prod_i P^{(k)}_i$ in one pass. The mixture log-probability of the
realized next state $\xs$ is formed by $-\log\Psith(\xs\!\mid\!\xt)$ plus the router regularizer.
\textbf{(b) Inference.} Starting from an all-\Mask{} sequence, each step runs the same trunk,
draws a single shared latent $k\sim w(\xt)$ (here $k{=}3$, highlighted), samples every masked
position from that one component $\prod_i P^{(k)}_i$, and feeds $\xs$ back as $\xt$ for the next
of {$J$} steps until $t{=}1$.}
\label{fig:lkfoverview}
\end{figure*}

The shared blocks, conditioned only on $(t,s)$, map the embedded input to a latent-agnostic
representation $h$. The router mean-pools $h$ over positions into a \emph{sequence-level} readout
of the mixture weights $w$, since ``which correlated mode is this sequence heading toward'' is a
global property of $\xt$. The latent branch tiles $h$ over the $M$ latents and runs the last $L_k$
blocks with a $k$-dependent conditioning vector, the sole channel through which $t,s$ and $k$
reach the residual stream. Hence, the token representation entering the latent blocks is identical
across $k$ (further details are in Appendix~\ref{app:arch}).

\subsection{Training objective}
\label{sec:objective}

LKF is trained by marginalizing the latent. Given a supervised next state $\xs$, write
$\ell_k=\sum_{i:\,\xt^i=\Mask}\log \Pik(\xs^i\mid\xt,k)$ for the log-probability that component $k$
assigns to the revealed content. The model scores $\xs$ with the mixture log-likelihood, summing
over the $M$ components in closed form with no sampled $k$ and no variational bound,
\begin{align}
\log\Psith(\xs\mid\xt) &=\logsumexp_k\big(\log w_k+\ell_k\big),\\
\mathcal L_{\text{nll}} &=-\E\!\left[\log\Psith(\xs\mid\xt)\right].
\label{eq:nll}
\end{align}
Since the position sum $\ell_k$ sits \emph{inside} the $\logsumexp$, the training target is the
probability that a single shared latent explains the whole masked set jointly, $\sum_k w_k\prod_i\Pik$,
the rank-$M$ correlated transition of Eq.~\eqref{eq:kernel}. Interchanging the sum and the
$\logsumexp$ would instead train the per-position marginal denoiser $\sum_k w_k\,\Pik$, which is
blind to correlation and would leave the latent without a training signal.

Because Eq.~\eqref{eq:nll} is a log-likelihood, its population
loss decomposes as a model-free entropy term plus
$\E_{\xt}\big[\mathrm{KL}\big(q(\cdot\mid\xt)\,\|\,\Psi(\cdot\mid\xt)\big)\big]$, where $q$ is the
true two-time transition, and the minimizer is therefore the pointwise KL projection of $q$ onto
the kernel class (Proposition~\ref{prop:iproj}, Appendix~\ref{proof:iproj}). Training reduces the
gap between the model and the true transition. When $q$ carries correlation that no rank-$M$
mixture can represent, this gap cannot reach zero, and Theorem~\ref{thm:mixgap} lower-bounds it by
the residual total correlation of $q$.

The supervised next state in Eq.~\eqref{eq:nll} can be any point on the path. The synthetic studies
of Appendix~\ref{app:synth} supervise the two-time transition $\xs$ directly, whereas the language
experiments supervise the clean endpoint $x_1$ at $s{=}1$.

\paragraph{Router regularizer.} A mixture router has two potential failure modes, and the
regularizer guards against both,
\begin{equation}
\mathcal R \;=\; -\,\lambda_{\text{ent}}\,\HH(\E_b[w])\;+\;\lambda_{\text{lb}}\,\E_b\big[\HH(w\mid\xt)\big],
\label{eq:reg}
\end{equation}
where $\E_b$ averages over a batch. The first term is the entropy of the batch-averaged weights.
Maximizing it, hence the minus sign, spreads usage across all $M$ components and rules out the
collapse in which the router sends every input to one component and the rest die. The second term
sets how much the latent can carry, and its sign is the opposite of the usual load-balancing choice~\citep{shazeer2017moe}.
The information the latent adds about the transition is bounded by the per-example prior entropy,
\begin{equation}
\I(\green{k};X_s\mid\xt)
=
\HH(\green{k}\mid\xt)-\HH(\green{k}\mid\xt,X_s)
\le
\HH(\green{k}\mid\xt),
\label{eq:mi_bound}
\end{equation}
With $\HH(\green{k}\mid\xt)=\HH(w\mid\xt)$, this entropy must remain
sufficiently high. We therefore set $\lambda_{\text{lb}}\le 0$ to
maximize the prior entropy and allow the exact mixture likelihood to
sharpen the posterior, driving $\HH(\green{k}\mid\xt,X_s)$ toward zero. A
positive $\lambda_{\text{lb}}$, as in conventional decisive routing,
reduces $\HH(\green{k}\mid\xt)$ and can force $\I(\green{k};X_s\mid\xt)$ toward zero,
silencing the latent.

\subsection{Inference}
\label{sec:inference}

Sampling starts from the all-\Mask{} sequence and applies {$J$} steps with
$t_j=\frac{j}{J}$ and {$t_{j+1}=\frac{j+1}{J}$}. Each step forwards the trunk once,
samples tokens at the currently masked positions from a single component $k$ under the analytic
(cached) reverse update, and keeps already-revealed tokens fixed because masking is monotone
(pseudocode in Appendix~\ref{app:pseudocode}). Any residual \Mask{} after {$J$} steps
is filled greedily. Since the latent is tied to a whole sequence, a rollout draws $k$ once at the
all-\Mask{} start, where $w$ is a constant prior, and holds it for the entire trajectory. Every
step is therefore explained by the same latent, and the decode matches the sequence-level
semantics of the training objective.

One such rollout uses a single component and therefore leaves the mixture's heterogeneity unused.
Our decode, \textbf{\textit{best-of-$M$}}, realizes that heterogeneity at inference by running all
$M$ rollouts in parallel with one component frozen in each, scoring every finished candidate with
the model's own sequence-level mixture negative ELBO (NELBO) of Eq.~\ref{eq:x0loss}, and returning
the argmin (Algorithm~\ref{alg:bestofm} in Appendix~\ref{app:pseudocode}). The model's own
likelihood is used for selection, which needs no reward model, no external judge, and no teacher,
and the $M$ rollouts are independent and therefore run concurrently at the sequential depth of a
single chain. 

The scoring passes can in fact be reduced. Every step evaluates all $M$ components in
order to marginalize over $k$, which makes the log-probability that component $k$ assigns to the
tokens revealed at that step available at no extra cost, and accumulating it along a trajectory
gives a per-component \emph{running evidence} by the time the rollout finishes. Selecting on this
quantity is free and is at least as accurate as the Monte-Carlo NELBO. Since it ranks candidates
before they finish, it also lets the weakest rollouts be pruned partway through, cutting the cost
of best-of-$M$ by roughly a factor of three.


\section{Theoretical analysis}
\label{sec:theory}

A mixture kernel~\cite{wang2026bayesian} captures the \emph{correlation}. This section makes that
quantitative in three steps: an exact decomposition of the correlation a rank-$M$ mixture carries
(Proposition~\ref{prop:tc}), a lower bound showing that a step's approximation error is at least
its \emph{uncaptured} total correlation (Theorem~\ref{thm:mixgap}), and a hardness result showing
that the mixture's nominal capacity can be structurally unusable (Theorem~\ref{thm:parity}). The
first two results are stated for the model's own step distribution and are therefore directly
measurable, and the estimators reported throughout the paper measure the quantities that
appear in the bound. Applying the total correlation of Eq.~\eqref{eq:tc_def} to the step
distribution $\Psith(\cdot\mid\xt)$ at a fixed conditioning $\xt$, write the TC of
the produced $X_s$ as
\begin{equation}
\label{eq:conditional-tc}
\TC(X_s\mid\xt)
=
\sum_{i=1}^{L}\HH(X_s^i\mid\xt)-\HH(X_s\mid\xt).
\end{equation}

\begin{proposition}[Latent decomposition and correlation capacity]
\label{prop:tc}
For the LKF transition in Eq.~\eqref{eq:kernel}, with sampled shared latent index
$\green{k\in[M]}$ and fixed $(\xt,t,s)$,
\begin{equation}
\label{eq:tc_decomp}
\TC(X_s\mid\xt)
=
\sum_{i=1}^{L}\I(\green{k};X_s^i\mid\xt)-\I(\green{k};X_s\mid\xt),
\end{equation}
and therefore
\begin{equation}
\label{eq:tc_capacity}
0\le\TC(X_s\mid\xt)
\le (L-1)\HH(\green{k}\mid\xt)
\le (L-1)\log M.
\end{equation}
\emph{Proof in Appendix~\ref{proof:tc}.}
\end{proposition}

\textbf{Approximation error from uncaptured correlation.}

\begin{theorem}[Mixture approximation lower bound]
\label{thm:mixgap}
Let $P$ be a distribution on $V^L$ and
$Q=\sum_{k=1}^{M}w_kQ_k$, where each $Q_k$ is a product distribution
and $Q>0$ on the support of $P$. With
$\gamma_Q(k\mid x)=w_kQ_k(x)/Q(x)$ and mutual information under
$\widetilde P(x,k)=P(x)\gamma_Q(k\mid x)$,
\begin{equation}
\label{eq:uncaptured-tc}
\mathrm{KL}(P\|Q)
\ge
\TC(P)-\left[\sum_{i=1}^{L}\I_{\widetilde P}(\green{k};X^i)
-\I_{\widetilde P}(\green{k};X)\right].
\end{equation}
Consequently,
\begin{equation}
\label{eq:mixgap}
\mathrm{KL}(P\|Q)\ge\big[\TC(P)-(L-1)\log M\big]_+.
\end{equation}
The bound is tight for the hidden-agreement construction of
Appendix~\ref{app:synth}. \emph{Proof in Appendix~\ref{proof:mixgap}.}
\end{theorem}

Applied per transition with $P=q(\cdot\mid\xt)$ and
$Q=\Psith(\cdot\mid\xt)$, the theorem lower-bounds the excess
population loss by uncaptured total correlation.
Although the ceiling is attainable, the next result gives a target
requiring exponentially many components.

\begin{theorem}[Parity separation]
\label{thm:parity}
Let $P_d$ be the uniform distribution on even-parity strings of $\{0,1\}^d$. If $Q$ is any
{mixture of at most $M$ product distributions} with $\mathrm{TV}(P_d,Q)\le\varepsilon<\tfrac13$, then
\begin{equation}
M\;\ge\;(1-3\varepsilon)\,2^{\,d-1}.
\end{equation}
(Proof in Appendix~\ref{proof:parity}.)
\end{theorem}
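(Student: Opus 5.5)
The plan is a direct covering argument. The key fact is that a single product distribution can place ``excess'' mass on at most one even-parity string; all of its other even-parity mass is matched by mass it places on odd strings, where $P_d$ has none. Write $E,O\subset\{0,1\}^d$ for the even- and odd-parity strings, so that $P_d(x)=2^{-(d-1)}$ on $E$ and $P_d(O)=0$. Let $Q=\sum_{k=1}^M w_kR_k$ with each $R_k$ a product distribution. From $\mathrm{TV}(P_d,Q)\le\varepsilon$ we get two facts:
\begin{equation*}
\sum_{x\in E}\min\bigl(P_d(x),Q(x)\bigr)=1-\mathrm{TV}(P_d,Q)\ge 1-\varepsilon,
\qquad
Q(O)=Q(O)-P_d(O)\le\varepsilon .
\end{equation*}

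\textbf{Combinatorial lemma.} For any product distribution $R$ on $\{0,1\}^d$ with mode $x^\star$ (ties broken arbitrarily), $R(E\setminus\{x^\star\})\le R(O)$; if $x^\star\in O$ this reads $R(E)\le R(O)$. I would prove this by induction on $d$. Split on the first coordinate.
\begin{itemize}
\item Strings $y$ with $y^1\neq x^{\star 1}$: flipping coordinate $1$ is a bijection between even and odd strings in this slice. Since $R$ is a product and $x^{\star1}$ is the more likely value of coordinate $1$, the flip does not decrease probability. Hence the even mass here is at most the odd mass in the complementary slice $y^1=x^{\star1}$, flipped.
\item Strings $y$ with $y^1=x^{\star 1}$: the question reduces to the same statement for the product on coordinates $2,\dots,d$. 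The parity classes may swap according to $x^{\star1}$, and the mode becomes the restriction of $x^\star$.
\end{itemize}
Bookkeeping the two cases gives the claim; the base case $d=1$ is immediate. The induction is the main obstacle, specifically the pairing of the two slices without double counting. If it becomes awkward, I would instead use the identity $R(E)-R(O)=\prod_i(1-2p_i)$ together with a direct injection that flips the last coordinate in which $y$ differs from $x^\star$.

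\textbf{Summing over components.} Use $\min(a,\sum_k b_k)\le\sum_k\min(a,b_k)$ for nonnegative $b_k$ to split the overlap across components. For each $k$, bound the contribution of the mode $x^\star_k$ by $P_d(x^\star_k)\le 2^{-(d-1)}$. Bound every other even $x$ by $w_kR_k(x)$, whose total is at most $w_kR_k(O)$ by the lemma. This gives
\begin{equation*}
1-\varepsilon\le\sum_{k=1}^M\Bigl(2^{-(d-1)}+w_kR_k(O)\Bigr)=M\,2^{-(d-1)}+Q(O)\le M\,2^{-(d-1)}+\varepsilon .
\end{equation*}
Therefore $M\ge(1-2\varepsilon)2^{d-1}\ge(1-3\varepsilon)2^{d-1}$, which yields the stated bound; the hypothesis $\varepsilon<\tfrac13$ only ensures that the bound is nontrivial.
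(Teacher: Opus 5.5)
Your overall argument is correct and takes a different, sharper route than the paper. The gap is in the proof you sketch for the combinatorial lemma, which does not go through as written; the identity you mention as a fallback closes it in two lines.

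\textbf{The route.} The paper applies the TV bound to two events.
\begin{itemize}
\item On $E$: using $Q_k(E)=\tfrac12(1+b_k)$ with $b_k=\prod_i(1-2p_i^k)$, it gets $\sum_k w_kb_k\ge 1-2\varepsilon$.
\item On the mode set $S$: using $Q_k(x_k^\ast)\ge|b_k|$, it gets $Q(S)\ge 1-2\varepsilon$, and compares this with $P_d(S)\le M2^{-(d-1)}$.
\end{itemize}
You instead expand the overlap $1-\mathrm{TV}=\sum_{x\in E}\min(P_d(x),Q(x))$ and split it over components by subadditivity of $\min$. Each component then covers at most one even string beyond what it pays for in odd mass. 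Your summation step is valid. This accounting gives $M\ge(1-2\varepsilon)2^{d-1}$, which is nontrivial for all $\varepsilon<\tfrac12$, whereas the paper's two-event argument only yields $(1-3\varepsilon)2^{d-1}$. The product-structure input is the same in both proofs: for $x^\star\in E$ your lemma reads $R(E)-R(O)\le R(x^\star)$, which is exactly the paper's $Q_k(x_k^\ast)\ge b_k$.

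\textbf{The lemma.} The induction you describe double-counts. Flipping coordinate $1$ charges the light-slice even mass to the heavy-slice odd mass. The inductive step in the heavy slice then charges the heavy-slice even mass to that same odd mass. You only obtain $R(E\setminus\{x^\star\})\le 2R(O\cap\{y^1=x^{\star1}\})$.

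The injection fallback cannot work either.
\begin{itemize}
\item The specific map fails. Let $e_j$ be the $j$-th unit vector. If $x^\star$ is odd, flipping the last coordinate where $y$ differs from $x^\star$ sends every even neighbour $x^\star\oplus e_j$ to $x^\star$ itself.
\item No map can work. In general there is no injection $\phi:E\setminus\{x^\star\}\to O$ with $R(\phi(y))\ge R(y)$. Take $d=4$, $x^\star=0000$, and all $p_i=p\in(0,\tfrac12)$. Then $R(y)\propto(p/(1-p))^{|y|}$ decreases with Hamming weight. Each of the six weight-two strings would need one of only four weight-one strings as its image.
\end{itemize}
So the lemma is a genuinely quantitative inequality, not a pointwise domination.

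\textbf{The fix.} Use the identity you mention, $R(E)-R(O)=\prod_i(1-2p_i)$.
\begin{itemize}
\item Magnitude: since $|1-2p_i|=2\max(p_i,1-p_i)-1\le\max(p_i,1-p_i)$, you get $|R(E)-R(O)|\le\prod_i\max(p_i,1-p_i)=R(x^\star)$.
\item Sign: the product has sign $(-1)^{|x^\star|}$, where $|x^\star|$ is the Hamming weight, unless some $p_i=\tfrac12$, in which case it vanishes.
\end{itemize}
Hence $R(E)-R(O)\le R(x^\star)$ when $x^\star\in E$, and $R(E)\le R(O)$ when $x^\star\in O$, which is exactly your lemma. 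An induction can also be made to work, but only if it tracks the signed difference, which factorizes as $(1-2p_1)\bigl(R'(E')-R'(O')\bigr)$; that is this identity again.
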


\textbf{The two theorems are falsifiable.} Because $\Ihat$ and $\TChat$ are measured on the
model's own step distribution, the two theorems make sharp predictions wherever the target total
correlation is known in closed form. Captured information must stay at zero on a target that
requires exponential mixture rank, and must rise with $M$ under the ceilings of
Eq.~\eqref{eq:tc_capacity} on a rank-$M$-representable target.

\begin{table*}[h!]
\centering
\scriptsize
\setlength{\tabcolsep}{4.5pt}
\caption{Generative perplexity (PPL) across the number of function evaluations (NFE)
on LM1B and WikiText-103 datasets, with ELBO-perplexity (ELBO-PPL), sample entropy $H$ and captured
information {$\I(\green{k})$} at $32$ NFEs. The rows marked $\dagger$ are decoded with best-of-$M$
and are budget-matched to one another with the same number of network calls during sampling.}
\label{tab:main}
\begin{tabular}{@{}c l r cccccc c c@{}}
\toprule
& & & \multicolumn{6}{c}{gen-PPL $\downarrow$ at NFE} & & \\
\cmidrule(lr){4-9}
& Method & ELBO-PPL $\downarrow$ & 1 & 2 & 4 & 8 & 16 & 32 & $H$@32 $\uparrow$ & $\I(\green{k}) \uparrow$ \\
\midrule
\multirow[c]{11}{*}[3pt]{\rotatebox[origin=c]{90}{\textbf{LM1B}}}
& MDLM        & 33.96 & 1082 & 1231 & 926 & 614 & 410 & 304 & ${\sim}4.0$ & n/a \\
& MDLM, best-of-$8^{\dagger}$ & -- & 1992 & 1401 & 770 & 428 & 281 & 199 & 5.13 & n/a \\
& SEDD        & 42.37 & 3235 & 3253 & 1117 & 485 & 303 & 225 & 4.41 & n/a \\
& SDTT teacher ($7$ rounds) & -- & 1645 & 1122 & 612 & 359 & 243 & 194 & 4.08 & n/a \\
& Di4C, best r$4$ & -- & 1375 & 899 & 453 & 236 & 156 & 115 & 3.99 & n/a \\
& ReDi, best rect$3$ & -- & \textbf{141} & \textbf{133} & \textbf{121} & \textbf{124} & \textbf{119} & 117 & 3.22 & n/a \\
& AR  & 27.61 & \multicolumn{6}{c}{$135$ \; (fixed at $128$ NFE)} & 4.34 & n/a \\
\addlinespace[2pt]
& \cellcolor{gray!12}\textbf{LKF} (\textit{\textbf{ours}}) $M{=}1$, best-of-$8^{\dagger}$ & \cellcolor{gray!12}-- & \cellcolor{gray!12}2024 & \cellcolor{gray!12}1308 & \cellcolor{gray!12}706 & \cellcolor{gray!12}400 & \cellcolor{gray!12}262 & \cellcolor{gray!12}183 & \cellcolor{gray!12}5.12 & \cellcolor{gray!12}0.00 \\
& \cellcolor{gray!12}\textbf{LKF (\textit{ours}) $M{=}1$}              & \cellcolor{gray!12}34.22 & \cellcolor{gray!12}2398 & \cellcolor{gray!12}1619 & \cellcolor{gray!12}913 & \cellcolor{gray!12}573 & \cellcolor{gray!12}429 & \cellcolor{gray!12}321 & \cellcolor{gray!12}5.22 & \cellcolor{gray!12}0.00 \\
& \cellcolor{gray!12}\textbf{LKF (\textit{ours}) $M{=}4^{\dagger}$} & \cellcolor{gray!12}35.52 & \cellcolor{gray!12}2169 & \cellcolor{gray!12}1649 & \cellcolor{gray!12}820 & \cellcolor{gray!12}407 & \cellcolor{gray!12}226 & \cellcolor{gray!12}166 & \cellcolor{gray!12}5.09 & \cellcolor{gray!12}0.86 \\
& \cellcolor{gray!12}\textbf{LKF (\textit{ours}) $M{=}8^{\dagger}$} & \cellcolor{gray!12}34.62 & \cellcolor{gray!12}1827 & \cellcolor{gray!12}1515 & \cellcolor{gray!12}625 & \cellcolor{gray!12}284 & \cellcolor{gray!12}169 & \cellcolor{gray!12}\textbf{105} & \cellcolor{gray!12}4.93 & \cellcolor{gray!12}1.48 \\
\midrule
\multirow[c]{5}{*}[1pt]{\rotatebox[origin=c]{90}{\textbf{WikiText-103}}}
& MDLM   & 28.47 & 7801 & 4344 & 1523 & 613 & 343 & 228 & 4.33 & n/a \\
& SEDD   & 29.16 & \textbf{4003} & 4052 & 1382 & 638 & 362 & 247 & 4.36 & n/a \\
\addlinespace[2pt]
& \cellcolor{gray!12}\textbf{LKF (\textit{ours}) $M{=}1$}              & \cellcolor{gray!12}\textbf{27.89} & \cellcolor{gray!12}6198 & \cellcolor{gray!12}3308 & \cellcolor{gray!12}1336 & \cellcolor{gray!12}558 & \cellcolor{gray!12}316 & \cellcolor{gray!12}223 & \cellcolor{gray!12}5.33 & \cellcolor{gray!12}0.00 \\
& \cellcolor{gray!12}\textbf{LKF (\textit{ours}) $M{=}4^{\dagger}$} & \cellcolor{gray!12}29.79 & \cellcolor{gray!12}5187 & \cellcolor{gray!12}2475 & \cellcolor{gray!12}849 & \cellcolor{gray!12}312 & \cellcolor{gray!12}164 & \cellcolor{gray!12}102 & \cellcolor{gray!12}5.28 & \cellcolor{gray!12}0.87 \\
& \cellcolor{gray!12}\textbf{LKF (\textit{ours}) $M{=}8^{\dagger}$} & \cellcolor{gray!12}28.56 & \cellcolor{gray!12}4403 & \cellcolor{gray!12}\textbf{2195} & \cellcolor{gray!12}\textbf{673} & \cellcolor{gray!12}\textbf{241} & \cellcolor{gray!12}\textbf{119} & \cellcolor{gray!12}\textbf{75} & \cellcolor{gray!12}5.25 & \cellcolor{gray!12}1.41 \\
\bottomrule
\end{tabular}

\end{table*}

\section{Experiments}
\label{sec:experiments}
\label{sec:language}

We evaluate LKF in two settings: language modeling benchmarks
LM1B~\citep{chelba2013lm1b} and WikiText-103~\citep{merity2016pointer}, and a synthetic corpora.

\subsection{Setup}
\label{sec:langprotocol}

\paragraph{Training.} LM1B uses the \texttt{bert-base-uncased} vocabulary~\citep{devlin2019bert}
($V{=}30522$) and WikiText-103 uses the GPT-2 byte-pair vocabulary~\citep{radford2019gpt2}
($V{=}50257$) at a sequence length of $128$. These
models were trained for $200\text{k}$ optimizer steps at a batch size of 512.

\paragraph{Metrics.} We report generative perplexity (gen-PPL) under a GPT-2-large
judge~\citep{radford2019gpt2} ($512$ samples, decoded and retokenized), swept over
$\mathrm{NFE}\in\{1,2,4,8,16,32\}$ and reported together with the sample entropy, since the judge
rewards low-entropy text and perplexity alone can misrank methods. We add ELBO-perplexity
(ELBO-PPL) for the likelihood models, which is the exponential of the per-token NELBO.

\subsection{Language modeling}
\label{sec:langcompare}

We validate LKF on natural language by training it at $M\in\{1,4,8\}$ against the factorized
baselines (MDLM, SEDD) and the few-step accelerators (SDTT, Di4C, ReDi) on the LM1B and
WikiText-103 datasets, together with an autoregressive transformer (AR) on LM1B.

\begin{figure}[h!]
\centering
\includegraphics[width=0.95\linewidth]{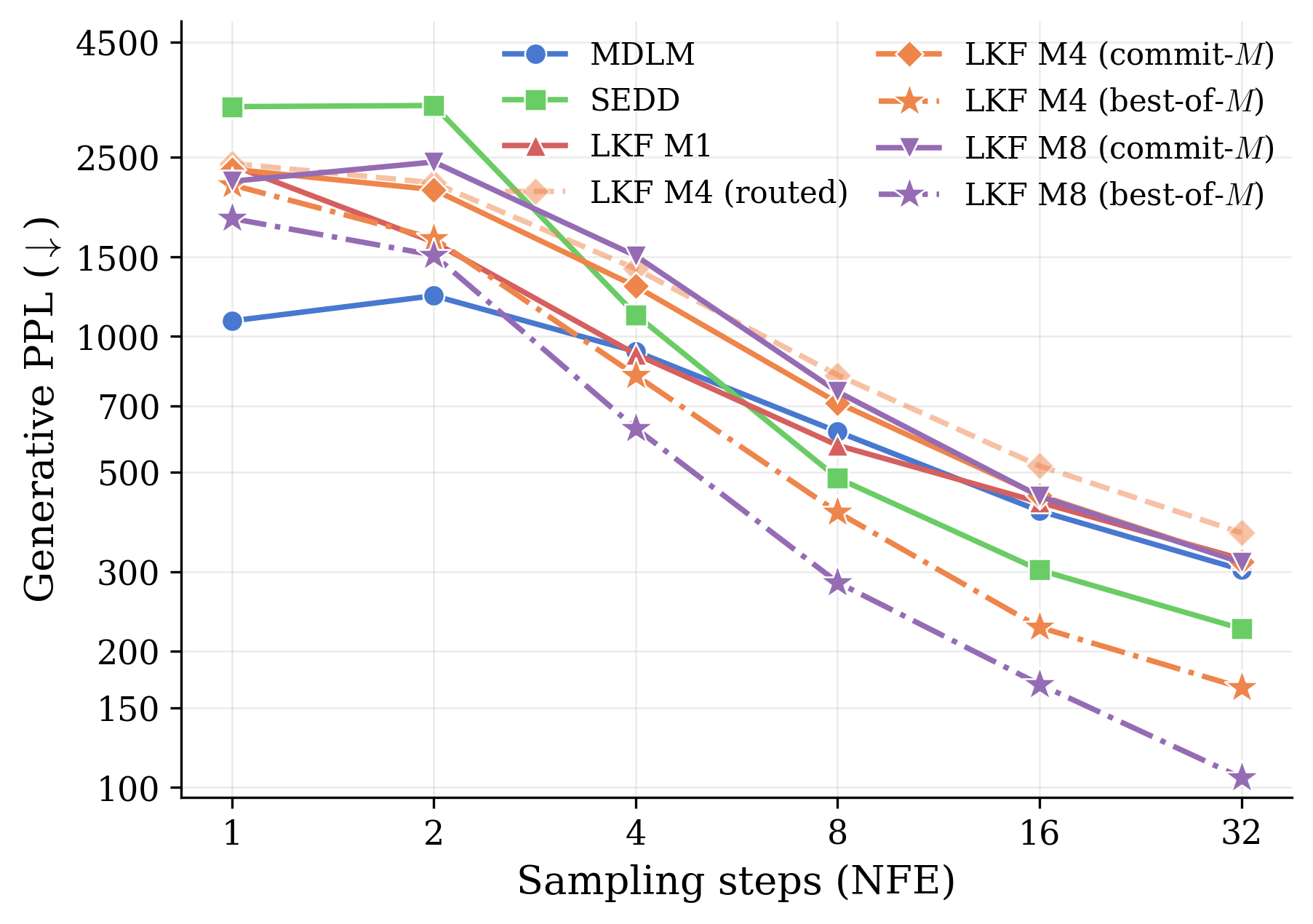}
\caption{Generative perplexity (PPL) versus number of sampling steps on LM1B, comparing LKF at
$M{=}\{1,4,8\}$ against the baseline methods.}
\label{fig:track2}
\end{figure}

\begin{figure*}[h!]
\centering
\includegraphics[width=0.8\textwidth]{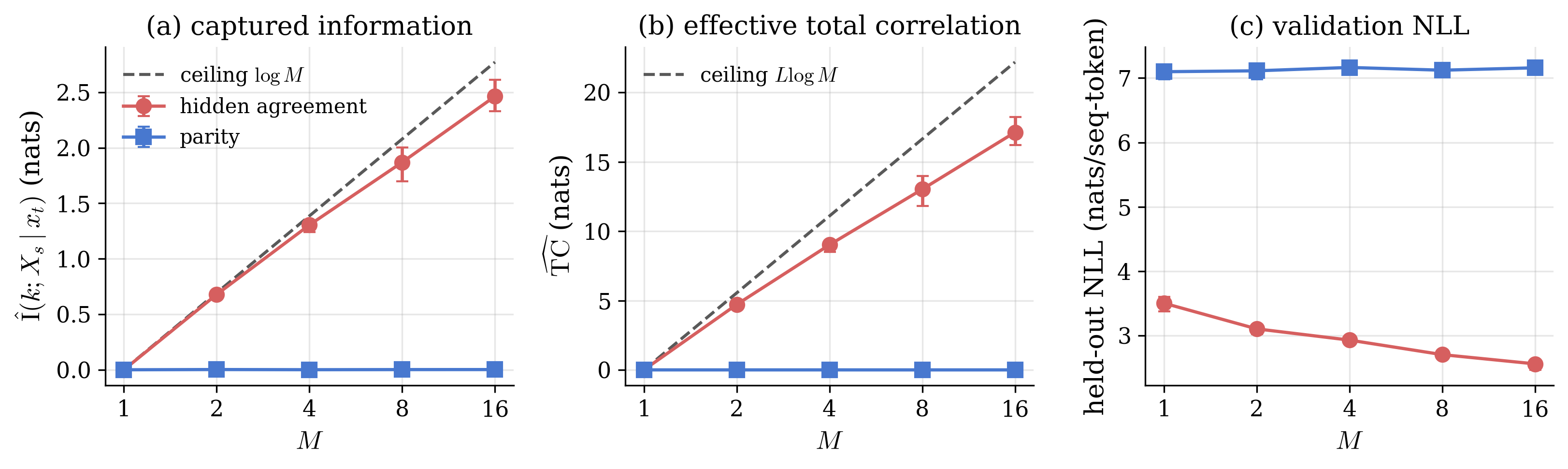}
\caption{Illustration of correlation and information captured by LKF on synthetic corpora. On \textit{hidden agreement}, the captured information $\Ihat$ and the
effective total correlation $\TChat$ rise monotonically with $M$, tracking their ceilings ($\log M$
{and $(L-1)\log M$, dashed)}, and held-out NLL falls accordingly. On \textit{parity}, every captured-correlation metric stays at zero for all $M$.}
\label{fig:f3}
\end{figure*}

\subsubsection{LM1B}
\label{sec:mainresults}

Adding mixture components improves few-step generation on LM1B, and the improvement grows with the
number of components, as shown in Table~\ref{tab:main} and Figure~\ref{fig:track2}. The
eight-component model reaches the lowest (best) generative perplexity of $105$ at $32$ evaluations,
and it does so with a higher sample entropy than the baselines (sample texts in
Appendix~\ref{app:samples}). The gains also come from how the sampling budget is spent rather than
from spending more of it.


We also retrained an autoregressive model (a causal transformer) under similar training conditions.
Its full-context factorization gives the lowest clean likelihood of any model, with validation
perplexity $27.61$ against MDLM's $33.96$ and our $M{=}1$ base at $34.22$, but its generative
perplexity is $135$ at a fixed $128$ function evaluations, one per token, and it therefore cannot
enter the few-step regime. The two are not directly comparable, since best-of-$8$ spends $M$
rollouts to reach $105$ but has a sequential depth of $32$ steps rather than AR's $128$, which
makes it cheaper in latency and more expensive in total evaluations. LKF therefore wins on judged
sample quality wherever the budget can be spent in parallel.

\subsubsection{WikiText-103}
\label{sec:wt103}

We repeat the evaluation on WikiText-103 by only changing the corpus and its GPT-2 vocabulary.
The single-component LKF again matches the retrained MDLM and edges ahead of both factorized
baselines with the lowest ELBO-perplexity.
The mixture rows replicate and strengthen the LM1B pattern on this second corpus. At $M{=}4$, the
mixture is live at $\I(k){=}0.87$ nats and reaches $102$ at $32$ evaluations, more than $2\times$
below both baselines, and widening to $M{=}8$ continues the trend on both axes, raising the captured
information to $\I(k){=}1.41$ nats and taking the corpus to $75$, which is $3.0\times$ below MDLM
and $3.3\times$ below SEDD.

The likelihood cost of the mixture is not monotone in $M$. Moving from one component to four trades
ELBO-perplexity $27.89$ for $29.79$, but the eight-component model recovers most of that to $28.56$,
and the wider mixture therefore buys its generation gain almost for free in likelihood terms. The
two factorized models have unigram entropies of $4.33$ (MDLM) and $4.36$ (SEDD) at $32$
evaluations, well below LKF ($5.25$). 


\subsubsection{Comparison with few-step accelerators}
\label{sec:redidi4c}


We run two few-step samplers from a trained teacher, namely ReDi from the DUO
base~\citep{sahoo2025duo} and Di4C from a seven-round SDTT student (the round-by-round
results are in Appendix~\ref{app:anchordetails} Tables~\ref{tab:di4crounds}
and~\ref{tab:redirounds}). The two behave differently from one another. ReDi holds the
single-evaluation end with a nearly flat curve, but it earns its per-round gains through a steady
entropy collapse that the GPT-2 judge rewards as fidelity.
Both models stop improving around their fifth round, and because generative perplexity alone
misranks them, we also compute their entropy.

As shown in Table~\ref{tab:main}, the distilled students remain ahead in the ultra-few-step regime,
since they inherit a converged teacher's calibration where the from-scratch mixture's
cold-start low-NFE points are weakest. The ordering reverses beyond roughly eight evaluations.
LKF overtakes the SDTT teacher and, at $32$ evaluations, reaches the lowest perplexity of any row
($105$), passing both the distilled and the rectified accelerators without using any teacher.
ReDi's low few-step numbers are not directly comparable, since they carry the entropy-collapse
failure mode noted above.

\subsection{Synthetic corpora with known correlation}
\label{sec:synthexp}

We built two synthetic corpora on which the TC is known in closed form and trained LKF on them at
$M\in\{1,2,4,8,16\}$. The first is
\textit{hidden agreement}, whose joint is exactly representable by a rank-$M$ mixture, and the
second is \textit{parity}, whose representation cost is exponential in the sequence length. The two
theorems make opposite predictions on these corpora, and both predictions hold
(Figure~\ref{fig:f3}). On hidden agreement, the mixture uses the latent, and uses it more as $M$
grows, where the captured information rises from $0$ to $2.46$ nats ($89\%$ of the $\log 16$ ceiling)
and the effective total correlation from $0$ to $17.1$ nats, both under their ceilings $\log M$ and
{$(L-1)\log M$, while held-out NLL falls} from $3.50$ to $2.56$ nats. On parity, the
latent stays off at every $M$, with captured information and total correlation pinned at zero and
the NLL flat, as the
hardness result of Theorem~\ref{thm:parity} predicts. The full experiments on the captured correlation and the router-mode analysis are in
Appendix~\ref{app:synth}.

\subsection{Ablations}
\label{sec:ablations}

We freeze the trained LKF model and vary how the latent is selected and used at inference (full details in Appendix~\ref{app:decode}).

\paragraph{The gain is a mixture property, and it grows with $M$.} We used the same selector and the judge to isolate the effect of best-of-$M$ policy, and varied only where the candidates come from. Replacing the $M$
components by $M$ noise draws of one component costs $16\%$ at $M{=}4$ ($157$ against $186$) and
$35\%$ at $M{=}8$ ($106$ against $163$). \bon{Eight i.i.d. chains reranked by their own likelihood
at the identical budget were also given to the trained MDLM baseline and to our
architecture-matched $M{=}1$ model. The reranking helps them substantially, taking them from $363$
to $199$ and from $327$ to $183$ at $32$ steps, but the mixture still reaches $106$ at the same
$8(J{+}T)$ calls, which is a $47\%$ and a $42\%$ advantage that widens monotonically from $14\%$ at
four steps.} The components therefore provide a candidate diversity, and
that advantage grows as the mixture grows (Appendix~\ref{app:decode}).

\paragraph{The selector is native, and the cheap version is the better one.} Our experiments show
that the ranking of the components matters, since a random pick from the same pool gives $327$ at
$M{=}8$, while every scored pick lands between $106$ and $163$. Selection here uses only the
model's own sequence likelihood, with no reward model, no external judge, and no temperature
tuning. The \textit{Monte-Carlo mixture NELBO} can be replaced with the \textit{running evidence}
that the trunk already produces during generation, which reaches $125$ against $131$ at $M{=}8$,
while removing the $MT$ scoring passes. The same ordering holds at every step, where the running evidence matches or beats the Monte-Carlo selector at a strictly lower
cost. We nonetheless report Table~\ref{tab:main} with the Monte-Carlo selector, which is the
conservative choice. The candidates can also be pruned mid-trajectory, because the evidence is
available at every step. \textit{Successive halving} reaches $145$ for $96$ calls at $M{=}8$,
which is three times cheaper than full best-of-$8$ for an $11\%$ higher perplexity.


\paragraph{Limitations}
\label{sec:discussion}
We note two main limitations of LKF. The first is that the single-step exactness is capped by the
per-step factorization. Each step is factorized given $(\xt,k)$, and one disagreeing token can
therefore push a sample off support independently of $M$, which only smaller steps or a richer
per-step structure would remove. The second is that the learned router is not very informative
along sampling trajectories. Making the router track component quality would turn best-of-$M$ into
a cheaper single-rollout policy.

\section{Conclusion}
\label{sec:conclusion}

We presented LKF, a discrete flow map whose finite-time kernel is a mixture of $M$ factorized
components, which expresses correlated steps natively and without a
teacher. We provided the theory to support it and conducted experiments on the LM1B and WikiText-103 datasets that give the best
few-step generative perplexity among the baseline methods.

\bibliography{references}

\onecolumn
\appendix
\section{Notation}
\label{app:notation}

Table~\ref{tab:notation} collects the symbols used throughout the paper.

\begin{table}[h!]
\centering\small
\caption{Notations used throughout the paper.}
\label{tab:notation}
\begin{tabular}{@{}ll@{}}
\toprule
Symbol & Meaning \\
\midrule
$L$ & sequence length (number of positions) \\
$V$, $\mathcal V=\{1,\dots,V\}$ & vocabulary size and vocabulary \\
$\Mask$ & absorbing (mask) symbol, $\Mask\notin\mathcal V$ \\
$x=(x^1,\dots,x^L)$ & a clean sequence, $x^i$ the token at position $i$ \\
$p_{\text{data}},\,p_{\text{model}}$ & data and model distributions over clean sequences \\
$t,s\in[0,1]$ & two times with $t<s$ (noisier to cleaner) \\
$\kappa_t$ & keep-probability schedule ($\kappa_0{=}0$, $\kappa_1{=}1$) \\
$\xt$ & partially masked sequence at time $t$ \\
$\Psith(\xs\mid\xt)$ & finite-time flow-map kernel from $t$ to $s$ \\
$M$ & number of mixture components \\
\green{$k\in[M]$} & \green{sampled latent component index; also used as the mixture-component index} \\
$w_k(\xt,t,s)$ & router weight of component $k$, with $\sum_k w_k=1$ \\
$\Pik(\cdot\mid\xt,k,t,s)$ & component $k$'s token distribution at position $i$ \\
$J$ & number of sampling steps, $\mathrm{NFE}=J$ \\
$B,\,D,\,L_k$ & batch size, model width, number of late (latent) blocks \\
$\HH,\,\I,\,\TC$ & entropy, mutual information, total correlation \\
\bottomrule
\end{tabular}
\end{table}

\section{The MDLM-equivalent clean-posterior objective}
\label{app:objx0}

The language experiments supervise the clean endpoint $x_1$ at $s{=}1$, which turns the LKF
objective of Eq.~\eqref{eq:nll} into a clean-posterior mixture whose $M{=}1$ case coincides with
MDLM. We build this instantiation around two requirements. The first is that its $M{=}1$ instance
recovers MDLM's absorbing evidence lower bound, which lets the mixture effect be read against a
matched anchor. The second is that its $M>1$ instance trains the latent on whole masked sets, which
is what makes $k$ carry sequence-level correlation. Following MDLM, the corruption uses the
loglinear schedule
$\sigma(\tau)=-\log\!\big(1-(1-\varepsilon)\tau\big)$ with ELBO weight
$\omega(\tau)=\mathrm d\sigma/\mathrm{expm1}(\sigma)=1/\tau$. For each step, we draw a single time
$\tau\sim U[\varepsilon,1]$ with antithetic (stratified) sampling, mask each position with
probability $(1-\varepsilon)\tau$ to form $x_\tau$, and query the clean head ($s{=}1$) under the
SUBS parameterization, which zeros the mask logit ($\text{logit}_{\Mask}\!\to\!-\infty$) and copies
revealed tokens through. Writing $\ell_i^k=\log P_i^k(x_1^i\mid x_\tau)$ for the clean-token
log-probability of component $k$ at position $i$ and $\mathcal M$ for the set of masked,
non-padding positions, the per-example loss is the \emph{sequence-level} mixture likelihood
\begin{equation}
\mathcal L_{x_0}\;=\;-\,\omega(\tau)\,
\logsumexp_k\!\Big(\log w_k+\textstyle\sum_{i\in\mathcal M}\ell_i^k\Big),
\label{eq:x0loss}
\end{equation}
accumulated over the batch and normalized by the total count of non-padding tokens, mirroring
MDLM's attention-mask normalization. As in the main objective, keeping the position-sum inside the
$\logsumexp$ makes the target $\sum_k w_k \prod_{i\in\mathcal M} P_i^k(x_1^i\mid x_\tau)$, the
rank-$M$ clean-endpoint posterior of Eq.~\eqref{eq:kernel}. 
{Its population objective is a positively weighted conditional cross-entropy, equivalently a
positively weighted expected conditional KL divergence up to a parameter-independent entropy term.
The training loop is Algorithm~\ref{alg:train}.}

\begin{proposition}[Clean-posterior KL decomposition]
\label{prop:x0proj}
Let $q(\cdot\mid x_\tau)$ be the true posterior over masked clean tokens and let
$\Psi^{x_0}_\theta(\cdot\mid x_\tau)$ be the LKF clean-endpoint mixture. With the same
per-example token normalization as Eq.~\eqref{eq:x0loss}, the population objective is
\[
\E\!\left[\frac{\omega(\tau)}{N_{\rm tok}}
\left\{\HH\big(q(\cdot\mid x_\tau)\big)+
\mathrm{KL}\big(q(\cdot\mid x_\tau)\\|\Psi^{x_0}_\theta(\cdot\mid x_\tau)\big)\right\}\right].
\]
Hence, any population minimizer minimizes the corresponding weighted expected conditional KL.
If the model class can realize the conditional projections jointly, it realizes them almost surely.
Theorem~\ref{thm:mixgap} lower-bounds each conditional KL by uncaptured total correlation.
\emph{Proof in Appendix~\ref{proof:x0proj}.}
\end{proposition}

For likelihood \emph{evaluation}, we additionally report the standard object that MDLM and SEDD
report, which is the Rao-Blackwellized NELBO of the marginal denoiser.

\begin{proposition}[Marginal NELBO for evaluation]
\label{prop:melbo}
The marginal denoiser $\bar x_\theta(x_\tau)_i=\sum_k w_kP_i^k(\cdot\mid x_\tau)$
satisfies the SUBS constraints. After multiplying the Monte Carlo estimate by
$1-\varepsilon$, its weighted per-position loss is the truncated Rao--Blackwellized absorbing
NELBO evaluated at $\bar x_\theta$. It upper-bounds the negative log-likelihood of the
continuous-time reverse process defined by this marginal denoiser and is comparable across $M$.
\emph{Proof in Appendix~\ref{proof:melbo}.}
\end{proposition}

The containment of MDLM at $M{=}1$ is exact at both the objective and the sampler level. Hence, the
$M{=}1$ language cell is an anchor by construction rather than by tuning.

\begin{proposition}[Exact MDLM containment at $M{=}1$]
\label{prop:contain}
At $M{=}1$, Eq.~\eqref{eq:x0loss} reduces to MDLM's weighted SUBS objective, and the LKF
analytic update equals MDLM's cached ancestral update on every grid
$t_j=j/J$, $t_{j+1}=(j+1)/J$. Thus, the $M{=}1$ objective and sampler coincide with MDLM.
\emph{Proof in Appendix~\ref{proof:contain}.}
\end{proposition}

\begin{remark}[Two likelihood objects at $M>1$]
\label{rem:twolik}
The sequence-level mixture objective and the marginal NELBO are distinct. The former rewards one
latent component for explaining the whole masked set, whereas the latter depends only on
per-position marginals. Thus, the marginal ELBO-perplexity does not measure the joint correlation
that a frozen-component finite-step sampler exploits.
\end{remark}

\section{Training and inference}
\label{app:pseudocode}

We present the training algorithm for LKF with the
clean-posterior mixture objective in Algorithm~\ref{alg:train}, and the inference with the best-of-$M$
decode policy in Algorithm~\ref{alg:bestofm}.

\begin{algorithm}[h!]
\caption{LKF training (exact-mixture likelihood of the clean tokens)}
\label{alg:train}
\begin{algorithmic}[1]
\REQUIRE corpus $\mathcal D$; network $\theta$; mixture size $M$; training steps $T$
\FOR{$n=1$ \TO $T$}
  \STATE sample a clean batch $x_1\sim\mathcal D$ and one noise level $\tau\sim U[\varepsilon,1]$ per sequence
  \STATE $x_\tau\leftarrow$ mask each position of $x_1$ with probability $(1-\varepsilon)\tau$; \ $\mathcal M\leftarrow$ masked positions
  \STATE run the network on $(x_\tau,\,s{=}1)$: component heads $P^1,\dots,P^M$ and router $w$
  \STATE $\ell_k\leftarrow\sum_{i\in\mathcal M}\log P_i^k(x_1^i)$ for each $k$ \COMMENT{how well component $k$ explains the \emph{whole} masked set}
  \STATE $\log\Psi\leftarrow\logsumexp_k\big(\log w_k+\ell_k\big)$ \COMMENT{exact marginalization over $k$; no sampled latent}
  \STATE $\mathcal L\leftarrow\frac1\tau\big({-}\log\Psi\big)$ averaged over the batch $+$ router regularizer $\mathcal R$ \COMMENT{ELBO weight; Eq.~\eqref{eq:reg}}
  \STATE gradient step on $\mathcal L$
\ENDFOR
\RETURN $\theta$
\end{algorithmic}
\end{algorithm}

\begin{algorithm}[h!]
\caption{LKF inference (best-of-$M$ decode at {$J$ steps})}
\label{alg:bestofm}
\begin{algorithmic}[1]
\REQUIRE trained network $\theta$; mixture size $M$; sampling steps {$J$}; scoring draws $R$
\FORALL{components $k=1,\dots,M$, one rollout each, in parallel}
  \STATE $x\leftarrow$ all-\Mask{}
  \FOR{{$j=0$ \TO $J-1$}}
    \STATE run the network on $(x,\,s{=}1)$ and read off component $k$'s clean prediction $P^k$
    \STATE reveal each \Mask{} position with probability {$\tfrac{1}{J-j}$}, drawing its token from $P_i^k$ \COMMENT{scheduled reveal rate; revealed tokens stay fixed}
  \ENDFOR
  \STATE $\hat x^k\leftarrow x$, with any residual \Mask{} filled by $\arg\max P_i^k$
\ENDFOR
\STATE $S_k\leftarrow$ mixture NELBO of $\hat x^k$ (Eq.~\eqref{eq:x0loss}), averaged over $R$ draws of $(\tau,\text{mask})$ shared across $k$ \COMMENT{the model scores its own candidates; no external judge}
\RETURN $\hat x^{k^\star}$ with $k^\star=\arg\min_k S_k$
\end{algorithmic}
\end{algorithm}

\section{Proofs}
\label{app:proofs}

Throughout, all state spaces are finite, all entropies and divergences are in nats, and all
quantities conditioned on $\xt$ are understood pointwise in $\xt$ unless an outer expectation is
written explicitly.

\subsection{Rate mixtures remain factorized}
\label{proof:rates}

Call a CTMC generator $R$ \emph{coordinatewise} if it decomposes as $R=\sum_{i=1}^L R^i$, where
$R^i$ assigns nonzero rate only to transitions that change coordinate $i$ alone. All factorized
discrete diffusions and flows have coordinatewise generators.

\begin{proposition}[Locally averaged rate mixtures remain coordinatewise]
\label{prop:rates}
Let $R_1,\dots,R_M$ be coordinatewise generators on a finite state space, with uniformly bounded
rates on the time interval considered, and let
$\bar R(x,y,t)=\sum_k w_k(x,t)R_k(x,y,t)$. Then $\bar R$ is coordinatewise, and its
length-$\delta$ transition kernel differs from the product of its coordinate marginals by
$O(\delta^2)$. This concerns a locally averaged generator, not a trajectory generated by one
globally frozen component.
\end{proposition}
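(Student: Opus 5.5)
The proof has two parts: a purely algebraic check that $\bar R$ is coordinatewise, and a short-time expansion of its transition kernel.

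\textbf{Coordinatewise structure.} Write each $R_k=\sum_i R_k^i$, where $R_k^i(x,y,t)$ is nonzero only if $y$ differs from $x$ in coordinate $i$ alone. Define
\[
\bar R^i(x,y,t)=\sum_k w_k(x,t)\,R_k^i(x,y,t).
\]
Then $\bar R=\sum_i\bar R^i$. The weights depend only on the current state $x$ and on $t$, not on the target $y$, so the support of $\bar R^i$ in $y$ is contained in the union of the supports of the $R_k^i$. Hence $\bar R^i$ also changes only coordinate $i$. Because $\sum_k w_k=1$ and each $R_k$ has zero row sums, the rows of $\bar R$ sum to zero. Off-diagonal entries are nonnegative, and rates are bounded by $\max_k\sup|R_k|$. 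So $\bar R$ is a valid coordinatewise generator, and this step is essentially bookkeeping.

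\textbf{Short-time kernel.} Let $K_\delta(x,y)$ be the length-$\delta$ kernel of $\bar R$ from time $t$. With bounded rates, the Dyson/time-ordered expansion gives
\[
K_\delta=I+\int_t^{t+\delta}\bar R(u)\,du+O(\delta^2)=I+\delta\bar R(t)+o(\delta).
\]
The $O(\delta^2)$ remainder collects paths with two or more jumps, which the uniform rate bound controls by $(\delta\,\|\bar R\|)^2$. If $\bar R$ is Lipschitz in $t$, the first-order term is $\delta\bar R(t)+O(\delta^2)$. Otherwise I keep the integral $\int\bar R$, which is still coordinatewise, and the argument below goes through unchanged.

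\textbf{Comparison with the product of marginals.} To first order, $K_\delta(x,\cdot)$ puts mass $1-\delta\sum_i\lambda^i(x)$ on $x$, where $\lambda^i(x)$ is the total exit rate of $\bar R^i$. It puts mass $\delta\,\bar R^i(x,y)$ on each single-coordinate change $y$, and only $O(\delta^2)$ on multi-coordinate changes. The coordinate-$i$ marginal is therefore $\pi_i(y^i)=\mathbf 1[y^i=x^i]+\delta\,\bar R^i(x,y^i)+O(\delta^2)$, with the diagonal term of $\bar R^i$ included. Multiply the marginals:
\[
\prod_i\pi_i(y^i)=\prod_i\Bigl(\mathbf 1[y^i=x^i]+\delta\,\bar R^i(x,y^i)\Bigr)+O(\delta^2).
\]
Expanding, the constant and linear terms in $\delta$ reproduce exactly $I+\delta\bar R$, because $\bar R=\sum_i\bar R^i$ is additive over coordinates. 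All cross terms carry $\delta^2$. So $K_\delta$ and $\prod_i\pi_i$ agree up to $O(\delta^2)$ entrywise, and since the state space is finite, also in total variation.

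The main obstacle is keeping the remainder constants uniform: they must not depend on $x$ or on the weights $w_k$. This follows from the uniform rate bound together with $\sum_k w_k=1$, since then $\|\bar R\|\le\max_k\|R_k\|$. I will close with the caveat in the statement: a globally frozen component $k$ drawn once is a mixture of kernels, not a mixture of rates, so this argument does not apply to it.
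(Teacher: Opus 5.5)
Your proof is correct. Its final step is the same as the paper's: take first-order marginals, then expand their product so that the linear terms reassemble $\bar R=\sum_i\bar R^i$ and every cross term is second order. Where you differ is in how you obtain the first-order form of the kernel.

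The paper argues on sample paths, writing $\varepsilon$ for the window length and $\delta_x$ for a point mass.
\begin{itemize}
\item It first shows that a coordinatewise generator changes one coordinate at a time almost surely.
\item It then dominates the jump count in the window by a $\mathrm{Poisson}(\Lambda\varepsilon)$ variable, so two or more jumps cost $O(\varepsilon^2)$.
\item Conditioning on zero or one jump gives $\mu=(1-\sum_i a_i)\delta_x+\sum_i a_i\mu_i+O(\varepsilon^2)$, where $a_i$ is the probability of a single jump in coordinate $i$ and $\mu_i$ is the post-jump law.
\end{itemize}
You instead expand the forward equation, $K_\delta=I+\int_t^{t+\delta}\bar R(u)\,du+O(\delta^2)$. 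After that you need only the algebraic fact that the integrated generator is still coordinatewise, with each $\int\bar R^i=O(\delta)$.

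What your route buys: the first-order coefficients are explicit in terms of the generator, and the constant is tied transparently to $\max_k\|R_k\|$. You also need no path-level facts such as non-simultaneous jumps or Poisson domination.

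What the paper's route buys: it never expands the generator in time, so the regularity of $w_k$ in $t$ never comes up. That is exactly the case your fallback to the integrated generator $\int\bar R$ is there to cover, and that fallback works.

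One small inaccuracy: the Dyson remainder is not literally the contribution of paths with two or more jumps, since the zero-jump survival probability also has a second-order term. The bound $(\delta\|\bar R\|)^2$ you attach to the remainder is still correct, so the argument is unaffected.
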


\emph{Proof.}

Write $R_k=\sum_{i=1}^L R_k^i$, where $R_k^i(x,y)=0$
unless $y$ differs from $x$ in coordinate $i$ alone. Fix arbitrary measurable weights
$w_k(x,t)\ge 0$ with $\sum_k w_k(x,t)=1$ and set
$\bar R(x,y,t)=\sum_k w_k(x,t)\,R_k(x,y,t)$ for $y\neq x$, with the diagonal defined by row sums.
The off-diagonal entries of $\bar R$ are nonnegative as convex combinations of nonnegative rates,
and its rows sum to zero by construction, so $\bar R$ is a valid CTMC generator.

We first verify the coordinatewise structure. If $y$ differs from $x$ in two or more coordinates, then
$R_k(x,y,t)=0$ for every $k$, because each $R_k$ is coordinatewise, hence
$\bar R(x,y,t)=\sum_k w_k(x,t)\cdot 0=0$. Defining
$\bar R^i(x,y,t)=\sum_k w_k(x,t)\,R_k^i(x,y,t)$ gives $\bar R=\sum_i\bar R^i$ with each $\bar R^i$
supported on single-coordinate-$i$ changes, so $\bar R$ is coordinatewise.

The sample paths of a CTMC are piecewise constant with
jumps at isolated times, and each jump moves the state from some $x$ to some $y$ with
$\bar R(x,y,t)>0$. By the coordinatewise structure, every such $y$ differs from $x$ in exactly one coordinate, so each
jump changes exactly one coordinate. Since two jumps coincide in time with probability zero, the
process changes one coordinate at a time almost surely.

We now turn to the infinitesimal factorization. Rates are bounded on the finite state space
and any compact time interval, say by $\Lambda$. The number of jumps in a window of length
$\varepsilon$ is stochastically dominated by a Poisson variable $N$ with mean
$\Lambda\varepsilon$, and
$\Pr[N\ge 2]\le\tfrac12(\Lambda\varepsilon)^2\le(\Lambda\varepsilon)^2$. Hence, up to an
$O(\varepsilon^2)$ total-variation error, we may assume the window contains zero jumps or one jump.

Writing
$\mu=P_\varepsilon(x,\cdot)$ for the time-$\varepsilon$ kernel started at $x$, condition on the
number of jumps. With zero jumps, the state stays at $x$. With exactly one jump, the single change
is in some coordinate $i$ by the one-coordinate-at-a-time property, an event of probability
$a_i=O(\varepsilon)$, after which the state lies in $\{y:y^j=x^j\ \forall j\neq i\}$, and we call
its conditional law $\mu_i$. Together with the jump-count bound,
\begin{equation*}
\mu=\Big(1-\sum_i a_i\Big)\,\delta_x+\sum_{i=1}^L a_i\,\mu_i+O(\varepsilon^2).
\end{equation*}

Next, project this decomposition onto
coordinate $i$. The point mass $\delta_x$ projects to $\delta_{x^i}$, each $\mu_j$ with $j\neq i$
leaves coordinate $i$ at $x^i$ and also projects to $\delta_{x^i}$, and $\mu_i$ projects to its
own $i$-th marginal $\nu_i$. Hence, the $i$-th marginal of $\mu$ is
\begin{equation*}
\Big(1-\sum_j a_j\Big)\delta_{x^i}+\sum_{j\neq i}a_j\,\delta_{x^i}+a_i\,\nu_i+O(\varepsilon^2)
=(1-a_i)\,\delta_{x^i}+a_i\,\nu_i+O(\varepsilon^2).
\end{equation*}

Finally, expand the product of these $L$ marginals. The all-$\delta$ term contributes
$\prod_i(1-a_i)\,\delta_x=\big(1-\sum_i a_i\big)\delta_x+O(\varepsilon^2)$, since every
higher-order term of the expansion of $\prod_i(1-a_i)$ carries at least two factors
$a_ia_j=O(\varepsilon^2)$. Each term selecting the $\nu_i$ factor exactly once contributes
$a_i\prod_{j\neq i}(1-a_j)\,\big(\delta_{x^1}\otimes\cdots\otimes\nu_i\otimes\cdots\otimes
\delta_{x^L}\big)=a_i\,\mu_i+O(\varepsilon^2)$, using $\prod_{j\neq i}(1-a_j)=1-O(\varepsilon)$
and the fact that $\mu_i$ agrees with $x$ off coordinate $i$. Every term selecting two or more
$\nu$ factors carries $a_ia_j=O(\varepsilon^2)$. Therefore
\begin{equation*}
\prod_{i=1}^L\big[(1-a_i)\delta_{x^i}+a_i\nu_i\big]
=\Big(1-\sum_i a_i\Big)\delta_x+\sum_i a_i\,\mu_i+O(\varepsilon^2)=\mu+O(\varepsilon^2),
\end{equation*}
which is the claimed $O(\varepsilon^2)$ total-variation bound. {The $\Theta(1/J)$ discretization
floor for simulating such locally averaged processes with $J$ product steps} is Theorem~1 of
\citet{hayakawa2024di4c} and is not re-proved here. \qed

\subsection{The LKF objective trains the KL projection}
\label{proof:iproj}

\begin{proposition}[Population KL decomposition]
\label{prop:iproj}
Fix $(t,s)$ and a class $\mathcal C$ of transition kernels. Then
\[
\mathcal L_{\mathrm{nll}}(\Psi)=
\E_{\xt}\HH\big(q(\cdot\mid\xt)\big)+
\E_{\xt}\mathrm{KL}\big(q(\cdot\mid\xt)\\|\Psi(\cdot\mid\xt)\big).
\]
Thus, any minimizer over $\mathcal C$ minimizes the expected conditional KL, and the excess over
the unrestricted optimum is that expected KL. A pointwise conclusion additionally requires
$\mathcal C$ to permit the conditional projections to be chosen jointly.
\end{proposition}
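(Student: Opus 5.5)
The plan is to expand the population negative log-likelihood pointwise in $\xt$ and apply the standard cross-entropy identity. I first make the data-generating structure explicit. The training pair $(\xt,X_s)$ is drawn by sampling $x_1\sim p_{\text{data}}$, forming $\xt$ by forward noising, and forming $X_s$ by the two-time reveal. Its joint law therefore factors as $p(\xt)\,q(\xs\mid\xt)$, where $q(\cdot\mid\xt)$ is the true two-time transition, obtained by marginalizing $x_1$ given $\xt$. By the tower property,
\[
\mathcal L_{\mathrm{nll}}(\Psi)=\E_{\xt}\Big[-\sum_{\xs}q(\xs\mid\xt)\log\Psi(\xs\mid\xt)\Big].
\]
All state spaces are finite, so the inner sum is a finite cross-entropy.

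Next, for each fixed $\xt$, I add and subtract $\sum_{\xs}q\log q$ to split the cross-entropy as $\HH(q(\cdot\mid\xt))+\mathrm{KL}(q(\cdot\mid\xt)\|\Psi(\cdot\mid\xt))$. Some care is needed with supports:
\begin{itemize}
\item Terms with $q(\xs\mid\xt)=0$ contribute nothing, by the convention $0\log 0=0$.
\item If $\Psi$ vanishes somewhere that $q$ is positive, both sides equal $+\infty$, so the identity still holds in the extended reals.
\end{itemize}
Taking $\E_{\xt}$ and using linearity gives the displayed decomposition. Linearity is valid here because the entropy term is finite and bounded by $L\log(V+1)$, so no $\infty-\infty$ arises.

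The consequences follow directly. The term $\E_{\xt}\HH(q(\cdot\mid\xt))$ does not depend on $\Psi$, so $\mathcal L_{\mathrm{nll}}$ and the expected conditional KL differ by a constant over $\mathcal C$. Hence their minimizers coincide. The unrestricted optimum is attained at $\Psi=q$, where the KL term vanishes. The excess of any $\Psi$ over that optimum is therefore exactly $\E_{\xt}\mathrm{KL}(q\|\Psi)$.

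For the pointwise caveat, suppose $\mathcal C$ is rich enough to contain a kernel $\Psi^\star$ that attains $\inf_{\Psi\in\mathcal C}\mathrm{KL}(q(\cdot\mid\xt)\|\Psi(\cdot\mid\xt))$ simultaneously for every $\xt$. Then any minimizer $\hat\Psi$ satisfies $\E[\mathrm{KL}(q\|\hat\Psi)-\mathrm{KL}(q\|\Psi^\star)]=0$ with a nonnegative integrand. So the two KLs agree $p(\xt)$-almost surely, which is the pointwise conclusion.

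The main obstacle is not any computation. It is stating precisely what "joint realizability" means, and handling the possibly infinite KL values, so that the nonnegativity argument is rigorous. I would phrase both as conventions in the extended reals.
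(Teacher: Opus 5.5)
Your proposal is correct and is essentially the paper's proof. Both fix $\xt$, add and subtract $\log q(\xs\mid\xt)$ to split the cross-entropy into $\HH(q)+\mathrm{KL}(q\,\|\,\Psi)$, average over $\xt$, use that the entropy term does not depend on $\Psi$, and note that $\Psi=q$ gives the unrestricted optimum. Your extra care with zero-probability terms, infinite KL, and the finite entropy term, plus the nonnegative-integrand argument for the pointwise caveat, makes rigorous what the paper only states in words as ``rectangular across conditioning states''; that argument needs the optimal expected KL to be finite, as you anticipate.
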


\emph{Proof.}

Fix $\xt$ and abbreviate $q=q(\cdot\mid\xt)$ and
$\Psi=\Psi(\cdot\mid\xt)$. Adding and subtracting $\log q(\xs)$ inside the expectation,
\begin{equation*}
-\E_{\xs\sim q}\big[\log\Psi(\xs)\big]
=-\sum_{\xs}q(\xs)\log q(\xs)+\sum_{\xs}q(\xs)\log\frac{q(\xs)}{\Psi(\xs)}
=\HH(q)+\mathrm{KL}\big(q\,\|\,\Psi\big),
\end{equation*}
where the first sum is the Shannon entropy of $q$, and the second is the KL divergence by
definition.

The population loss draws $\xt$ from the true
intermediate law and $\xs$ from $q(\cdot\mid\xt)$, so averaging this identity over $\xt$
gives
$\mathcal L_{\mathrm{nll}}(\Psi)=\E_{\xt}\big[\HH\big(q(\cdot\mid\xt)\big)\big]
+\E_{\xt}\big[\mathrm{KL}\big(q(\cdot\mid\xt)\,\|\,\Psi(\cdot\mid\xt)\big)\big]$, the stated
decomposition.

The entropy term is independent of $\Psi$, so minimizing over $\mathcal C$ is equivalent to
minimizing the expected conditional KL. If $\mathcal C$ is rectangular across conditioning
states, or contains a kernel formed from the pointwise conditional projections, the minimization
separates pointwise. Otherwise, only the expected-KL statement follows. The unrestricted optimum is
$q$, for which the KL is zero, so the excess over that optimum is the displayed expected KL.
\qed

\subsection{Proof of Proposition~\ref{prop:tc}
(latent decomposition and correlation capacity)}
\label{proof:tc}

\begin{snugshade}\noindent
\textbf{Proposition~\ref{prop:tc}}
(latent decomposition and correlation capacity). \itshape
Let $\green{k\in[M]}$ denote the sampled shared latent component index of the LKF transition.
Then
\[
\TC(X_s\mid\xt)
=
\sum_{i=1}^{L}\I(\green{k};X_s^i\mid\xt)
-
\I(\green{k};X_s\mid\xt),
\]
and
\[
0
\le
\TC(X_s\mid\xt)
\le
(L-1)\HH(\green{k}\mid\xt)
\le
(L-1)\log M.
\]
\end{snugshade}

\emph{Proof.}
Fix $\xt$ and $(t,s)$ and suppress them temporarily from the notation.
Conditioned on $\green{k}$, the LKF component is a product distribution, so
\begin{equation}
\label{eq:prop1-conditional-entropy}
\HH(X_s\mid \green{k})
=
\sum_{i=1}^{L}\HH(X_s^i\mid \green{k}).
\end{equation}
Therefore,
\begin{align}
\sum_{i=1}^{L}\I(\green{k};X_s^i)-\I(\green{k};X_s)
&=
\sum_{i=1}^{L}\left[\HH(X_s^i)-\HH(X_s^i\mid \green{k})\right]
-
\left[\HH(X_s)-\HH(X_s\mid \green{k})\right]
\nonumber\\
&=
\sum_{i=1}^{L}\HH(X_s^i)-\HH(X_s)
\nonumber\\
&=
\TC(X_s),
\label{eq:prop1-decomposition-proof}
\end{align}
where Eq.~\eqref{eq:prop1-conditional-entropy} cancels the
conditional-entropy terms. Restoring the conditioning on $\xt$ proves
Eq.~\eqref{eq:tc_decomp}.

For every coordinate, $X_s^i$ is a deterministic function of $X_s$,
so the data-processing inequality gives
\[
\I(\green{k};X_s^i\mid\xt)
\le
\I(\green{k};X_s\mid\xt).
\]
Choose
\[
j\in\operatorname*{arg\,max}_{1\le i\le L}\I(\green{k};X_s^i\mid\xt).
\]
Using Eq.~\eqref{eq:tc_decomp},
\begin{align}
\TC(X_s\mid\xt)
&\le
\sum_{i=1}^{L}\I(\green{k};X_s^i\mid\xt)
-
\I(\green{k};X_s^j\mid\xt)
\nonumber\\
&=
\sum_{i\ne j}\I(\green{k};X_s^i\mid\xt)
\nonumber\\
&\le
(L-1)\HH(\green{k}\mid\xt)
\nonumber\\
&\le
(L-1)\log M.
\label{eq:prop1-capacity-proof}
\end{align}
The final inequality follows because $\green{k}$ takes at most $M$ values.
Finally,
\[
\TC(X_s\mid\xt)
=
\mathrm{KL}\!\left(
\Pr(X_s\mid\xt)
\middle\|
\prod_{i=1}^{L}\Pr(X_s^i\mid\xt)
\right)
\ge 0.
\]
This proves Eq.~\eqref{eq:tc_capacity}.
\qed

\subsection{Proof of Theorem~\ref{thm:mixgap}
(mixture approximation lower bound)}
\label{proof:mixgap}

\begin{snugshade}\noindent
\textbf{Theorem~\ref{thm:mixgap}}
(mixture approximation lower bound). \itshape
With $\gamma_Q(k\mid x)=w_kQ_k(x)/Q(x)$ and
$\widetilde P(x,k)=P(x)\gamma_Q(k\mid x)$,
\[
\mathrm{KL}(P\|Q)
\ge
\TC(P)
-
\left[
\sum_i\I_{\widetilde P}(\green{k};X^i)
-
\I_{\widetilde P}(\green{k};X)
\right]
\ge
\left[\TC(P)-(L-1)\log M\right]_+.
\]
\end{snugshade}

\emph{Proof.}
Define
\begin{align}
\gamma_Q(k\mid x)&=\frac{w_kQ_k(x)}{Q(x)},
&\widetilde P(x,k)&=P(x)\gamma_Q(k\mid x),\nonumber\\
\alpha_k&=\green{\widetilde P(k)},
&\widetilde P_k&=\green{\widetilde P(X\mid k)},
\label{eq:lifted-definitions}
\end{align}
and let
\begin{equation}
\label{eq:lifted-model}
\widetilde Q(x,k)=w_kQ_k(x).
\end{equation}
Then
\[
\widetilde P(k\mid x)
=
\widetilde Q(k\mid x)
=
\gamma_Q(k\mid x).
\]
Applying the KL chain rule by conditioning first on $X$ gives
\begin{equation}
\label{eq:kl-chain-x}
\mathrm{KL}(\widetilde P\|\widetilde Q)
=
\mathrm{KL}(P\|Q).
\end{equation}
Conditioning instead on $\green{k}$ gives
\begin{equation}
\label{eq:klchain}
\mathrm{KL}(\widetilde P\|\widetilde Q)
=
\mathrm{KL}(\alpha\|w)
+
\sum_{k:\alpha_k>0}\alpha_k
\mathrm{KL}(\widetilde P_k\|Q_k).
\end{equation}
Together with Eq.~\eqref{eq:kl-chain-x}, Eq.~\eqref{eq:klchain} gives the KL decomposition.

Let $\widetilde P_k^i$ denote the $i$-th marginal of
$\widetilde P_k$. Since $Q_k$ is a product distribution,
\begin{align}
\mathrm{KL}(\widetilde P_k\|Q_k)
&=
\mathrm{KL}\!\left(
\widetilde P_k
\middle\|
\prod_{i=1}^{L}\widetilde P_k^i
\right)
+
\sum_{i=1}^{L}\mathrm{KL}(\widetilde P_k^i\|Q_k^i)
\nonumber\\
&=
\TC(\widetilde P_k)
+
\sum_{i=1}^{L}\mathrm{KL}(\widetilde P_k^i\|Q_k^i)
\nonumber\\
&\ge
\TC(\widetilde P_k).
\label{eq:slice-tc-bound}
\end{align}
Consequently,
\begin{equation}
\label{eq:weighted-slice-bound}
\mathrm{KL}(P\|Q)
\ge
\mathrm{KL}(\alpha\|w)
+
\sum_{k:\alpha_k>0}\alpha_k\TC(\widetilde P_k).
\end{equation}

All remaining entropies and mutual informations are evaluated under
the lifted target joint $\widetilde P$. Since
$\green{\widetilde P(X\mid k)=\widetilde P_k}$,
\begin{align}
\sum_{k:\alpha_k>0}\alpha_k\TC(\widetilde P_k)
&=
\sum_{i=1}^{L}\HH_{\widetilde P}(X^i\mid \green{k})
-
\HH_{\widetilde P}(X\mid \green{k})
\nonumber\\
&=
\TC(P)
-
\left[
\sum_{i=1}^{L}\I_{\widetilde P}(\green{k};X^i)
-
\I_{\widetilde P}(\green{k};X)
\right].
\label{eq:slice-tc-identity}
\end{align}
Substituting Eq.~\eqref{eq:slice-tc-identity} into
Eq.~\eqref{eq:weighted-slice-bound} and dropping the nonnegative term
$\mathrm{KL}(\alpha\|w)$ proves Eq.~\eqref{eq:uncaptured-tc}.

To obtain the cardinality bound, let
\[
j\in\operatorname*{arg\,max}_{1\le i\le L}\I_{\widetilde P}(\green{k};X^i).
\]
Since $X^j$ is a deterministic projection of $X$,
\[
\I_{\widetilde P}(\green{k};X)
\ge
\I_{\widetilde P}(\green{k};X^j).
\]
Hence,
\begin{align}
\sum_{i=1}^{L}\I_{\widetilde P}(\green{k};X^i)
-
\I_{\widetilde P}(\green{k};X)
&\le
\sum_{i\ne j}\I_{\widetilde P}(\green{k};X^i)
\nonumber\\
&\le
(L-1)\HH_{\widetilde P}(\green{k})
\nonumber\\
&\le
(L-1)\log M.
\label{eq:captured-cardinality-bound}
\end{align}
Combining this with Eq.~\eqref{eq:uncaptured-tc} and
$\mathrm{KL}(P\|Q)\ge0$ proves Eq.~\eqref{eq:mixgap}.

For tightness, let $P$ be the hidden-agreement distribution: draw
$v$ uniformly from a vocabulary of size $V$ and set all $N$ positions
equal to $v$. Take $M=V$ components
\[
Q_v=\delta_{(v,\ldots,v)}
\]
with uniform weights. Then $Q=P$, so $\mathrm{KL}(P\|Q)=0$. The
latent component is determined by every coordinate and by the full
sequence, yielding
\[
\I_{\widetilde P}(\green{k};X^i)=\log V,
\qquad
\I_{\widetilde P}(\green{k};X)=\log V.
\]
Therefore, the captured correlation is
\[
N\log V-\log V=(N-1)\log V=\TC(P),
\]
and the lower bound is attained.
\qed

\subsection{Proof of Theorem~\ref{thm:parity} (parity separation)}
\label{proof:parity}

\begin{snugshade}\noindent
\textbf{Theorem~\ref{thm:parity}} (parity separation). \itshape
Let $P_d$ be uniform on the even-parity strings in $\{0,1\}^d$. If a {mixture of at most $M$ product distributions} $Q=\sum_{k=1}^M w_k Q_k$ satisfies $\mathrm{TV}(P_d,Q)\le\varepsilon<\tfrac13$, then
\[
M\;\ge\;(1-3\varepsilon)\,2^{\,d-1}.
\]
The number of factorized components needed to approximate parity therefore grows exponentially in
the dimension.
\end{snugshade}

\emph{Proof.} Let $E\subset\{0,1\}^d$ be the set of even-parity strings, $|E|=2^{d-1}$, and let
$P_d$ be uniform on $E$. Let $Q=\sum_{k=1}^M w_kQ_k$ with each $Q_k$ a product,
$p_i^k=Q_k(X^i{=}1)$, and suppose $\mathrm{TV}(P_d,Q)\le\varepsilon<\tfrac13$. Throughout we use
$\mathrm{TV}(P,Q)\ge P(A)-Q(A)$ for any event $A$, which is the definitional bound applied at $A$.

We first show that the mixture weight concentrates on high-bias components. Define the parity bias
$b_k=\E_{Q_k}\big[(-1)^{\sum_iX^i}\big]$. Because $Q_k$ is a product, the coordinates are
independent, and the expectation of a product of independent factors is the product of their
expectations,
\begin{equation*}
b_k\;=\;\E_{Q_k}\Big[\prod_{i=1}^d(-1)^{X^i}\Big]
\;=\;\prod_{i=1}^d\E_{Q_k}\big[(-1)^{X^i}\big]
\;=\;\prod_{i=1}^d\big[(1-p_i^k)-p_i^k\big]
\;=\;\prod_{i=1}^d(1-2p_i^k)\;\in\;[-1,1].
\end{equation*}
The even-parity indicator is $\mathbf 1\{x\in E\}=\tfrac12\big(1+(-1)^{\sum_ix^i}\big)$, since
the inner sign is $+1$ on even and $-1$ on odd strings. Taking expectations,
$Q_k(E)=\tfrac12(1+b_k)$, hence $Q(E)=\sum_kw_kQ_k(E)=\tfrac12\big(1+\sum_kw_kb_k\big)$, while
$P_d(E)=1$. Applying the total-variation bound at the event $E$,
\begin{equation*}
\varepsilon\;\ge\;\mathrm{TV}(P_d,Q)\;\ge\;P_d(E)-Q(E)\;=\;\tfrac12\Big(1-\sum_kw_kb_k\Big),
\qquad\text{so}\qquad \sum_kw_kb_k\;\ge\;1-2\varepsilon.
\end{equation*}
On weighted average, the components must therefore have parity bias close to one.

Next, we show that high-bias components are near-deterministic. Let $x_k^\ast$ be a coordinatewise
mode of $Q_k$, so $Q_k(x_k^\ast)=\prod_i\max(p_i^k,1-p_i^k)$, and set $a_i^k=|1-2p_i^k|$. Then
$\max(p_i^k,1-p_i^k)=\tfrac{1+a_i^k}{2}$, and the elementary inequality $\tfrac{1+a}{2}\ge a$ for
$a\in[0,1]$ (equivalently $1-a\ge 0$) gives
\begin{equation*}
Q_k(x_k^\ast)\;=\;\prod_i\frac{1+a_i^k}{2}
\;\ge\;\prod_ia_i^k\;=\;\prod_i\big|1-2p_i^k\big|\;=\;|b_k|\;\ge\;b_k.
\end{equation*}
{A component with large positive parity bias therefore places at least that much mass on one string.}

Finally, count the modes against the uniform target. Let
$S=\{x_1^\ast,\dots,x_M^\ast\}$, so $|S|\le M$. Combining the two previous displays,
\begin{equation*}
Q(S)\;\ge\;\sum_kw_k\,Q_k(x_k^\ast)\;\ge\;\sum_kw_kb_k\;\ge\;1-2\varepsilon,
\end{equation*}
while $P_d$ places mass at most $2^{-(d-1)}$ on each string, so
$P_d(S)\le M\,2^{-(d-1)}$. The mixture piles at least $1-2\varepsilon$ of its mass on at most $M$
strings, while the target spreads uniformly over $2^{d-1}$ strings. Applying the total-variation
bound at the event $S$,
\begin{equation*}
\varepsilon\;\ge\;\mathrm{TV}(P_d,Q)\;\ge\;Q(S)-P_d(S)\;\ge\;1-2\varepsilon-M\,2^{-(d-1)},
\end{equation*}
which rearranges to $M\ge(1-3\varepsilon)\,2^{d-1}$. \qed

\subsection{Proof of Proposition~\ref{prop:x0proj} (clean-posterior KL decomposition)}
\label{proof:x0proj}

\begin{snugshade}\noindent
\textbf{Proposition~\ref{prop:x0proj}} (clean-posterior KL decomposition). \itshape
The normalized clean-posterior loss is a positively weighted entropy plus a positively weighted
expected conditional KL.
\end{snugshade}

\emph{Proof.}
For fixed $(\tau,x_\tau)$, Eq.~\eqref{eq:x0loss} assigns probability
\[
\Psi^{x_0}_\theta(x_1^{\mathcal M}\mid x_\tau)
=\sum_{k=1}^M w_k(x_\tau)\prod_{i\in\mathcal M}P_i^k(x_1^i\mid x_\tau).
\]
Since $x_1^{\mathcal M}\sim q(\cdot\mid x_\tau)$,
\[
-\E_q\log\Psi^{x_0}_\theta
=\HH(q)+\mathrm{KL}(q\\|\Psi^{x_0}_\theta).
\]
Retaining the actual per-example normalization and averaging gives
\[
\E\!\left[\frac{\omega(\tau)}{N_{\rm tok}}
\left\{\HH(q)+\mathrm{KL}(q\\|\Psi^{x_0}_\theta)\right\}\right].
\]
The entropy term is parameter-independent, and joint realizability is needed to infer the pointwise
projection from the shared parameterization. Finally, Theorem~\ref{thm:mixgap} applies
conditionally because $\Psi^{x_0}_\theta$ is a mixture of at most $M$ product distributions.
\qed

\subsection{Proof of Proposition~\ref{prop:melbo} (marginal NELBO for evaluation)}
\label{proof:melbo}

\begin{snugshade}\noindent
\textbf{Proposition~\ref{prop:melbo}} (marginal NELBO for evaluation). \itshape
The marginal denoiser satisfies SUBS, and the properly normalized single-time estimator equals the
truncated Rao--Blackwellized absorbing NELBO.
\end{snugshade}

\emph{Proof.}
Every component assigns zero probability to $\Mask$ and copies an already revealed token, so
convex combination preserves both SUBS constraints. For every masked position,
\[
\log\sum_k w_kP_i^k(x_1^i\mid x_\tau)
=\log \bar x_\theta(x_\tau)_i[x_1^i],
\]
so the marginal loss is the weighted per-position SUBS cross-entropy.

For the loglinear corruption,
$|\alpha'_\tau|/(1-\alpha_\tau)=1/\tau=\omega(\tau)$. Since
$\tau\sim U[\varepsilon,1]$,
\[
(1-\varepsilon)\E_\tau[f(\tau)]
=\int_\varepsilon^1 f(\tau)\\,d\tau.
\]
Thus, $(1-\varepsilon)\E[\mathcal L_{\mathrm{marg}}]$ is the truncated
Rao--Blackwellized absorbing NELBO of $\bar x_\theta$, and standard ELBO reasoning gives the
likelihood upper bound. This result concerns the continuous-time process defined by the marginal
denoiser, and it does not identify that process with the limit of a sampler that freezes one latent
component for the whole trajectory. \qed

\subsection{Proof of Proposition~\ref{prop:contain} (MDLM containment at $M{=}1$)}
\label{proof:contain}

\begin{snugshade}\noindent
\textbf{Proposition~\ref{prop:contain}} (MDLM containment at $M{=}1$). \itshape
At $M{=}1$, the objective and finite-grid analytic sampler reduce to their MDLM counterparts.
\end{snugshade}

\emph{Proof.}
When $M=1$, $w_1=1$ and Eq.~\eqref{eq:x0loss} reduces to
$-\omega(\tau)\sum_{i\in\mathcal M}\log P_i^1(x_1^i\mid x_\tau)$, the weighted SUBS
objective with the same corruption, masked-position restriction, and token normalization.

For sampling, use the paper's clean-survival convention $\kappa_0=0$, $\kappa_1=1$. For
$0\le t<s\le1$, a position masked at $t$ is revealed by $s$ with probability
\[
\frac{\kappa_s-\kappa_t}{1-\kappa_t},
\]
and otherwise remains masked, and revealed positions carry over. Substituting the single clean head
for the unknown clean token gives both analytic updates. On the grid
$t_j=j/J$, $t_{j+1}=(j+1)/J$ with $\kappa_t=t$,
\[
\frac{t_{j+1}-t_j}{1-t_j}=\frac{1}{J-j},
\]
which is the sampler's reveal probability. The transition kernels and initial all-mask state agree,
so the sampler laws coincide for every $J$. \qed

\subsection{Comparison to prior guarantees}

SEDD proves that the factorized update is the KL projection onto products, but it does not quantify
the projection error. Di4C proves a $\Theta(1/J)$ floor for products and the universality of
mixtures, but it gives no rate in $M$. ReDi certifies that coupling rectification monotonically
decreases the same conditional total correlation from the data side. We supply the missing quantity
in Theorem~\ref{thm:mixgap}, which is an identity for the projection error of an $M$-mixture,
together with a matching hardness case in Theorem~\ref{thm:parity}.

\section{Architecture details}
\label{app:arch}

\paragraph{Network diagram.} Figure~\ref{fig:arch} shows the full LKF forward pass for one
flow-map step, including the shared trunk, the pooled router, and the late latent branch.

\begin{figure}[h!]
\centering
\resizebox{0.42\columnwidth}{!}{%
\begin{tikzpicture}[
  box/.style={draw,rounded corners,align=center,inner sep=4pt,font=\small,minimum height=7mm,minimum width=28mm},
  sh/.style={box,fill=blue!6},
  la/.style={box,fill=orange!12},
  ar/.style={-{Latex[length=2mm]},thick},
  car/.style={-{Latex[length=2mm]},thick,dashed,green!40!black},
  el/.style={font=\scriptsize,text=black!55},
]
\node[font=\small] (x) {$\xt$};
\node[sh,below=5mm of x] (emb) {embed};
\node[sh,below=5mm of emb,minimum height=11mm] (trunk) {shared blocks\\ $\times\,(\text{depth}-L_k)$};
\node[sh,below left=8mm and -6mm of trunk] (router) {router};
\node[la,below right=8mm and -6mm of trunk] (lat) {latent blocks $\times\,L_k$};
\begin{scope}[on background layer]
  \node[la,fit=(lat),inner sep=0pt,shift={(2.8mm,-2.8mm)}] {};
  \node[la,fit=(lat),inner sep=0pt,shift={(1.4mm,-1.4mm)}] {};
\end{scope}
\node[la,below=8mm of lat] (head) {token head};
\node[font=\small] (out) at ($(router.south)!0.5!(head.south)+(0,-1.3)$)
  {$\Psith(\xs\!\mid\!\xt)=\displaystyle\sum_{k=1}^{M} w_k\prod_{i=1}^{L} \Pik$};

\node[font=\small,right=24mm of trunk] (ts) {$t,s$};
\node[el,right=2.6mm of lat.east,yshift=-3mm,anchor=west] {$k=1,\dots,M$};

\draw[ar] (x) -- (emb);
\draw[ar] (emb) -- (trunk);
\draw[ar] (trunk.south) -- ++(0,-3mm) -| (router.north);
\draw[ar] (trunk.south) -- ++(0,-3mm) -| (lat.north);
\draw[ar] ([yshift=-2.8mm]lat.south) -- (head.north);
\draw[ar] (router.south) -- node[el,left]{$w$} (out.north -| router.south);
\draw[ar] (head.south) -- node[el,right]{$P^k$} (out.north -| head.south);
\draw[car] (ts) -- (trunk);
\draw[car] (ts.south) |- ([yshift=0mm]lat.east);
\end{tikzpicture}%
}
\caption{One LKF flow-map step. Blue is the shared computation, which is run once and is never
conditioned on the latent, and orange is the per-component computation, which is repeated for
$k=1,\dots,M$ (stacked cards). The trunk output feeds a router, which scores how well each
component fits the input, and a short stack of latent blocks plus token head, which produce the
$M$ candidate distributions $P^k$. The step is their mixture, which draws $k$ with probability
$w_k$ and then decodes every position from component $k$. The step times $t,s$ (dashed)
condition every block, whereas $k$ conditions only the orange stack. Exact marginalization over
all $M$ components therefore costs $M$ light passes rather than $M$ full forward passes.}
\label{fig:arch}
\end{figure}

\paragraph{Input and conditioning embeddings.} Tokens are embedded and summed with a learned
positional embedding, $X^{(0)}=\mathrm{Emb}(\xt)+\mathrm{Pos}(0{:}L)\in\R^{B\times L\times D}$. The
two scalar times are lifted by a sinusoidal feature map and separate MLPs, and
a learned latent embedding is added for the latent blocks:
\begin{gather}
c_{\text{shared}} = \mathrm{MLP}_t(\gamma(t))+\mathrm{MLP}_s(\gamma(s))\in\R^{B\times D},
\\
c_{k} = c_{\text{shared}} + k_{\text{emb}}(k)\in\R^{B\times D},
\label{eq:cond}
\end{gather}
where $\gamma(\cdot)$ is the sinusoidal embedding. The conditioning vector is the sole channel
through which $t,s$ (and $k$) reach the residual stream. The latent embedding is what makes
component $k$ different. \emph{The token representation entering the latent blocks is identical
across $k$, and only the FiLM parameters differ}.

\paragraph{Shared trunk, router, latent branch.} The shared blocks map $X^{(0)}$ to a
latent-agnostic representation $h\in\R^{B\times L\times D}$ conditioned only on $(t,s)$. The router
mean-pools $h$ over positions and applies a two-layer MLP to produce $M$ logits, softmaxed to the
mixture weights $w$. We pool to a single latent per sequence because the shared $k$ is what
carries cross-position correlation, whereas a per-position router would collapse the kernel back
to a factorized product. The latent branch tiles $h$ to $[B\!\cdot\!M,L,D]$ and runs the last $L_k$
blocks with the $k$-dependent conditioning $c_k$. A final RMSNorm and a weight-tied linear token
head produce $[B,M,L,V]$ logits, reshaped so that
$\Pik(\cdot\mid\xt,k)=\softmax_V(\text{logits}[:,k,:,:])$.

\paragraph{A worked one-step example.} Figure~\ref{fig:sampling} contrasts one factorized step
with one LKF step on a sequence whose first and third positions are constrained to agree. The
factorized step decides the two positions independently and typically violates the constraint,
while the LKF step conditions both on the same drawn $k$ and satisfies it in the same step.

\begin{figure}[h!]
\centering
\resizebox{\textwidth}{!}{%
\begin{tikzpicture}[
  tok/.style={draw,rounded corners=1pt,minimum size=6.5mm,inner sep=1pt,font=\small},
  msk/.style={tok,fill=gray!16},
  ok/.style={tok,fill=green!16},
  bad/.style={tok,fill=red!14},
  ar/.style={-{Latex[length=2mm]},thick},
  lbl/.style={font=\scriptsize,align=center},
  tl/.style={font=\small\bfseries},
]
\node[tl,anchor=west] at (-1.3,2.15) {(a) Factorized step \textit{(other methods)}};
\node[lbl] at (-1.0,1.35) {$\xt$};
\node[msk] at (0,1.35){\Mask};\node[msk] at (1.2,1.35){\Mask};\node[msk] at (2.4,1.35){\Mask};
\draw[ar] (1.2,0.95) -- (1.2,0.35);
\node[lbl,anchor=west] at (1.15,0.65) {one step};
\node[lbl] at (-0.5,0.0) {$\xs$};
\node[ok] at (0.5,0){A};\node[tok] at (1.2,0){?};\node[bad] at (1.9,0){B};
\node[font=\large\bfseries,red!75!black] at (2.5,0){$\times$};
\node[lbl,anchor=west,text width=3.7cm] at (-0.8,-0.95)
  {positions $1$ and $3$ must agree, but are sampled independently: $A\neq B$ with high probability.};

\begin{scope}[shift={(6.6,0)}]
\node[tl,anchor=west] at (-0.3,2.15) {(b) LKF step \textit{(ours)}};
\node[lbl] at (-1.0,1.35) {$\xt$};
\node[msk] at (0,1.35){\Mask};\node[msk] at (1.2,1.35){\Mask};\node[msk] at (2.4,1.35){\Mask};
\node[lbl,anchor=west,fill=orange!14,rounded corners,inner sep=2pt] at (3.2,1.35) {draw $k\sim w(\xt)$};
\draw[ar] (1.2,0.95) -- (1.2,0.35);
\node[lbl,anchor=west] at (1.15,0.65) {one step,\\ all positions use $k$};
\node[lbl] at (-0.5,0.0) {$\xs$};
\node[ok] at (0.5,0){A};\node[tok] at (1.2,0){?};\node[ok] at (1.9,0){A};
\node[font=\large\bfseries,green!45!black] at (2.5,0){$\checkmark$};
\node[lbl,anchor=west,text width=3.7cm] at (-0.7,-0.95)
  {both positions are decoded from the \emph{same} component $k$, so the agreement holds in a single step.};
\end{scope}
\end{tikzpicture}%
}
\caption{Generation view of the same mechanism. Both panels start from an all-\Mask{} sequence
whose first and third positions are constrained to take the same value. \textbf{(a)} A factorized
step (Eq.~\ref{eq:factorized}) samples every position from its own marginal, so the two constrained
positions are decided independently and typically disagree, and coordinating them requires
additional steps. \textbf{(b)} One LKF step first draws a single latent $k\sim w(\xt)$ and then
decodes every position from component $k$. Because the same $k$ conditions both positions, the
agreement is produced in one step. The middle position is unconstrained (`?') in both cases.}
\label{fig:sampling}
\end{figure}

\paragraph{Late injection.} Exact marginalization over $k$ requires the
network to evaluate all $M$ components. Injecting $k$ at the input forces \emph{every} block to
depend on $k$, at a cost of $M\cdot\text{depth}$ block-passes, whereas injecting $k$ only into the
last $L_k$ blocks keeps the shared trunk reusable and brings the cost down to
$(\text{depth}-L_k)+M\,L_k$ (Table~\ref{tab:compute}). We trade here on the modelling assumption
that a latent-agnostic representation of the partially-masked sequence supplies enough context, and
that the latent only has to steer the shallow, late ``which completion'' decision.

\begin{table}[h!]
\centering\small
\caption{Marginalization cost in transformer-block-passes per step, at $\text{depth}{=}12$,
$L_k{=}4$. }

\label{tab:compute}
\begin{tabular}{@{}lccccc@{}}
\toprule
$M$ & 1 & 2 & 4 & 8 & 16 \\
\midrule
input injection ($M\cdot\text{depth}$) & 12 & 24 & 48 & 96 & 192 \\
\textbf{late injection} ($(\text{depth}{-}L_k){+}M L_k$) & 12 & 16 & 24 & 40 & \textbf{72} \\
speed-up $\times$ & 1.0 & 1.5 & 2.0 & 2.4 & \textbf{2.7} \\
\bottomrule
\end{tabular}%
\end{table}

\subsection{Inside a DiT block}
\label{sec:block}

Every block (shared or latent) is a DiT block~\citep{peebles2023dit} that modulates a standard
transformer sublayer pair by six FiLM parameters produced from the conditioning vector $c$
(Fig.~\ref{fig:block}). We condition through adaLN rather than cross-attention or extra
conditioning tokens because it is cheap, proven at scale, and gives the latent a clean per-block
slot that adds no sequence length. Writing $\{\gamma_1,\beta_1,g_1,\gamma_2,\beta_2,g_2\}=\mathrm{Lin}(\mathrm{SiLU}(c))$,
the block computes
\begin{align}
u &= x + g_1\odot \mathrm{Attn}\!\big((1+\gamma_1)\odot\mathrm{RMSNorm}(x)+\beta_1\big),\\
y &= u + g_2\odot \mathrm{SwiGLU}\!\big((1+\gamma_2)\odot\mathrm{RMSNorm}(u)+\beta_2\big),
\end{align}
with the modulation broadcast over the $L$ positions. The individual components are chosen as
follows.
\begin{itemize}
\item \textbf{RMSNorm} $\mathrm{RMSNorm}(x)=x\,/\sqrt{\tfrac1D\sum_j x_j^2+\epsilon}\odot s$ is
cheaper than and as stable as LayerNorm, and is the LLaMA/DiT default.
\item \textbf{adaLN with zero-init gates.} The FiLM projection is zero-initialized in the
\emph{shared} blocks, which makes each block the identity $y=x$ at step $0$ and lets the
conditioning ramp in smoothly, and this is what keeps deep training stable. The \emph{latent} blocks
deliberately break this symmetry, since
under the standard zero-init the $M$ components would start identical, receive identical
gradients, and stay collapsed to a single factorized kernel.
\item \textbf{RoPE bidirectional attention.} Rotary positions give relative-position generalization
across lengths, and attention is \emph{non-causal} because denoising sees the whole partially-masked
sequence at once (there is no generation order to respect, unlike autoregression).
\item \textbf{SwiGLU MLP} $\mathrm{SwiGLU}(z)=W_o\big(\mathrm{SiLU}(W_1 z)\odot W_2 z\big)$ is
the gated-linear-unit feed-forward used by strong modern transformers.
\end{itemize}

\begin{figure}[h!]
\centering
\resizebox{0.34\columnwidth}{!}{%
\begin{tikzpicture}[
  op/.style={draw,rounded corners,align=center,inner sep=3pt,font=\small,minimum height=6.5mm,minimum width=26mm},
  nrm/.style={op,fill=blue!6},
  att/.style={op,fill=orange!12},
  mlp/.style={op,fill=purple!8},
  film/.style={op,fill=green!10,minimum width=20mm},
  add/.style={draw,circle,inner sep=1.5pt,font=\small},
  ar/.style={-{Latex[length=2mm]},thick},
  car/.style={-{Latex[length=2mm]},thick,dashed,green!40!black},
  el/.style={font=\scriptsize,text=black!55},
]
\node[font=\small] (in) {$x$};
\node[nrm,below=4mm of in] (n1) {norm};
\node[film,below=4mm of n1] (m1) {scale, shift};
\node[att,below=4mm of m1] (attn) {attention};
\node[film,below=4mm of attn] (g1) {gate};
\node[add,below=4mm of g1] (a1) {$+$};
\node[nrm,below=4mm of a1] (n2) {norm};
\node[film,below=4mm of n2] (m2) {scale, shift};
\node[mlp,below=4mm of m2] (mlp) {MLP};
\node[film,below=4mm of mlp] (g2) {gate};
\node[add,below=4mm of g2] (a2) {$+$};
\node[font=\small,below=4mm of a2] (out) {$y$};

\node[font=\small] (c) at ($(m1.east)+(24mm,4mm)$) {$c\,(t,s,k)$};
\node[film,below=5mm of c,minimum width=14mm] (cmlp) {MLP};

\draw[ar] (in) -- (n1);
\draw[ar] (n1) -- (m1);
\draw[ar] (m1) -- (attn);
\draw[ar] (attn) -- (g1);
\draw[ar] (g1) -- (a1);
\draw[ar] (a1) -- (n2);
\draw[ar] (n2) -- (m2);
\draw[ar] (m2) -- (mlp);
\draw[ar] (mlp) -- (g2);
\draw[ar] (g2) -- (a2);
\draw[ar] (a2) -- (out);
\draw[ar] (in.west) -- ++(-11mm,0) |- (a1.west);
\draw[ar] ($(a1.south)+(0,-2mm)$) -- ++(-14mm,0) |- (a2.west);
\draw[ar] (c) -- (cmlp);
\draw[car] (cmlp.west) -- node[el,above]{$\gamma_1,\beta_1$} (m1.east);
\draw[car] ([xshift=-4mm]cmlp.south) |- node[el,above,pos=0.8]{$g_1$} (g1.east);
\draw[car] (cmlp.south) |- node[el,above,pos=0.85]{$\gamma_2,\beta_2$} (m2.east);
\draw[car] ([xshift=4mm]cmlp.south) |- node[el,above,pos=0.9]{$g_2$} (g2.east);
\end{tikzpicture}%
}
\caption{One LKF block, in the standard adaLN (DiT) layout. The residual stream runs through a
pre-norm attention and MLP pair. The conditioning vector $c$, which carries the step times and,
in the latent blocks, the latent $k$, is mapped by a small MLP to the scale and shift of each
norm ($\gamma,\beta$) and the gate on each residual branch ($g$). This conditioning path is the
only way $t,s,k$ reach the computation. The shared blocks zero-initialize the map, which makes each
block start as the identity, whereas the latent blocks initialize it small but nonzero. Hence, the
$M$ components differ from the first step of training.}
\label{fig:block}
\end{figure}

\subsection{Anchor and accelerator details}
\label{app:anchordetails}

We establish the factorized anchors here, verify numerically that the LKF $M{=}1$ point is a
faithful instance of native MDLM, and give the round-by-round accelerator tables behind
Sec.~\ref{sec:redidi4c}. The mixture claim rests on that $M{=}1$ point being a genuine MDLM
instance, since any change with $M$ must then be attributable to the mixture and not to a weaker
backbone, and we therefore verify the equivalence rather than asserting it. On the objective, the
clean-posterior
$M{=}1$ loss is MDLM's weighted SUBS cross-entropy and agrees with it to $3.6\times10^{-15}$ on
identical draws (Proposition~\ref{prop:contain}). On the sampler, the analytic-reverse update at
$M{=}1$ is numerically MDLM's cached ancestral update. On architecture, the trunk matches the full
native MDLM backbone ($139.89$M against $139.3$M parameters), and at $M{=}1$ the router and latent
embedding receive identically zero gradient, which leaves the extra parameters inert and the model
optimizing the same function class as MDLM.

\paragraph{Autoregressive anchor.} The AR row of Table~\ref{tab:main} is a causal transformer
trained under the same protocol as the matched anchors, with dimension $512$, $8$ heads, no
dropout, no weight averaging, and untied embeddings, totaling $69.1$M parameters. It is trained on
LM1B at sequence length $128$, batch size $512$, $200\text{k}$ steps, and learning rate
$3\times10^{-4}$. For decoding, we used temperature-one ancestral sampling with one forward pass per
position and no nucleus truncation. The model therefore contributes a single
generative-perplexity point at $128$ function evaluations rather than an NFE curve. Its
reported $27.61$ is a validation perplexity rather than an ELBO bound, which makes the
likelihood comparison conservative in the AR model's favor, and it is still the only anchor whose
clean likelihood beats every masked model at this scale.

\begin{table}[h!]
\centering\small
\caption{Di4C distillation rounds on LM1B (each round distills from the previous, $50\text{k}$
steps). Quality bottoms at round $4$ and turns at round $5$, at nearly constant entropy.}
\label{tab:di4crounds}
\begin{tabular}{@{}l cccccc c@{}}
\toprule
gen-PPL @NFE & 1 & 2 & 4 & 8 & 16 & 32 & $H$@32 \\
\midrule
SDTT teacher & 1645 & 1122 & 612 & 359 & 243 & 194 & 4.08 \\
Di4C r$1$ & 1613 & 1081 & 575 & 321 & 214 & 160 & 4.10 \\
Di4C r$2$ & 1557 & 1031 & 538 & 304 & 198 & 151 & 4.07 \\
Di4C r$3$ & 1419 & 942 & 485 & 268 & 183 & 137 & 4.04 \\
\textbf{Di4C r$4$ (min)} & 1375 & 899 & 453 & 236 & 156 & \textbf{115} & 3.99 \\
Di4C r$5$ (turn) & 1358 & 908 & 467 & 268 & 177 & 132 & 3.94 \\
\bottomrule
\end{tabular}
\end{table}

\begin{table}[h!]
\centering\small
\caption{ReDi rectification rounds on LM1B (each round rectifies the previous couplings). Quality
bottoms at rectification $3$, but every round pays in a monotone entropy collapse.}
\label{tab:redirounds}
\begin{tabular}{@{}l cccccc c@{}}
\toprule
gen-PPL @NFE & 1 & 2 & 4 & 8 & 16 & 32 & $H$@32 \\
\midrule
ReDi base (DUO) & 325 & 378 & 331 & 259 & 216 & 194 & 3.96 \\
ReDi rect$1$ & 203 & 198 & 173 & 150 & 138 & 131 & 3.70 \\
ReDi rect$2$ & 166 & 148 & 135 & 128 & 122 & 121 & 3.48 \\
\textbf{ReDi rect$3$ (min)} & 141 & 133 & 121 & 124 & 119 & \textbf{117} & 3.22 \\
ReDi rect$4$ & 129 & 124 & 120 & 122 & 120 & 118 & 2.95 \\
ReDi rect$5$ (turn) & 133 & 129 & 128 & 123 & 122 & 123 & 2.77 \\
\bottomrule
\end{tabular}
\end{table}

\paragraph{Architecture equivalence detail.} The full-match trunk reproduces the native MDLM
backbone rather than only the smaller matched trunk: hidden $768$, $12$ heads, $12$ blocks,
dropout $0.1$ with MDLM's placement, untied biased output head, a moving average with the same
decay and warmup, and MDLM's block internals (a biased GELU MLP, weight-only LayerNorm, and the
standard adaLN conditioning). At $M{=}1$, the router and latent embedding receive identically zero
gradient (a log-softmax over a single logit is identically zero and the regularizers vanish). The
residual differences are direction-neutral or fall on the LKF side. LKF feeds the real noise level
through the same conditioning pathway that must later carry the latent, which is strictly more
information than MDLM's constant conditioning. The latent blocks also use a small non-zero adaLN
initialization for the $M>1$ symmetry break, and the data order differs by seed.

\section{Decoding-policy ablations}
\label{app:decode}

A mixture step is only worth its extra components if something at inference time chooses among
them, and we take that claim apart in four steps. We first attribute the best-of-$M$ gain to the
mixture itself rather than to the generic reranking that decodes it. We then ask whether any
single-rollout policy recovers the same gain, and since none of them does, some form of selection is
necessary. That necessary selection is then made free, after which we turn to the components that
the selection chooses among. Every policy is previewed at a single operating point in
Table~\ref{tab:decode}, and the later tables expand the ones that matter over the step grid and
across the two corpora.

\paragraph{Protocol.} Unless stated otherwise, every cell uses the LM1B models trained for $200$k
steps, $J{=}32$ sampling steps, $512$ samples, and the same GPT-2-large judge with the decoding and
retokenization that we use everywhere else, and we report gen-PPL together with the unigram entropy
$H$ of the same samples. Three policies recur throughout this section, and we define them once here.
\emph{Commit-$M$} draws one component from the router at the all-\Mask{} start, holds it for the
whole trajectory, and returns that rollout without any selection at {$J$} network calls.
\emph{Best-of-$M$} runs one rollout per component and returns the argmin of the $T{=}4$ Monte-Carlo
mixture NELBO of Sec.~\ref{sec:inference} at $M(J{+}T)$ calls. \emph{Running evidence} replaces that
score with the log-probability that each component assigns to the tokens revealed along its own
trajectory, which the trunk already computes at every step, and it therefore costs $MJ$ calls and no
additional passes. Where two policies are compared on a fixed candidate pool, the per-step random
stream is shared across the component rollouts, which removes the noise as a confounder. Under a
shared stream and a frozen component, pruning never alters the trajectory of a surviving rollout.
Hence, any pruning schedule can be evaluated from one complete run.

\paragraph{Provenance.} Sampling is stochastic, and the tables below come from three separate
harnesses. Their absolute values are therefore not bit-comparable across tables even where the
policy is identical. Tables~\ref{tab:main} and~\ref{tab:commitm} come from the main evaluation
harness, Tables~\ref{tab:decode} and~\ref{tab:policynfe} come from the ablation harness, which draws
a fresh pool per operating point so that every selector can score the same candidates, and
Table~\ref{tab:bestofn} comes from a third harness that has to load two models at once. The three
disagree by a few percent on the same policy, for instance $166$ against $157$ at $M{=}4$, and
$105$, $106$ and $105.6$ at $M{=}8$ and $J{=}32$, which bounds the run-to-run spread of a
$512$-sample estimate. We therefore compare numbers only within a table.

\begin{table*}[h!]
\centering
\small
\setlength{\tabcolsep}{4.5pt}
\caption{Every decoding policy at $J{=}32$ on LM1B. The candidate pool and the selector are varied
independently, and \emph{calls} is the total number of network evaluations per delivered sample.
\emph{Shared noise} reuses one per-step random stream across the $M$ rollouts so that only the
component differs.}
\label{tab:decode}
\begin{tabular}{@{}l l c cc cc@{}}
\toprule
& & & \multicolumn{2}{c}{$M{=}4$} & \multicolumn{2}{c}{$M{=}8$} \\
\cmidrule(lr){4-5}\cmidrule(lr){6-7}
Candidate pool & Selector & calls & gen-PPL & $H$ & gen-PPL & $H$ \\
\midrule
$M$ components, independent noise & mixture NELBO & $M(J{+}T)$ & $157$ & $5.06$ & $\mathbf{106}$ & $4.96$ \\
$M$ components, shared noise      & mixture NELBO & $M(J{+}T)$ & $189$ & $5.11$ & $131$ & $4.98$ \\
$M$ components, shared noise      & running evidence & $MJ$ & $184$ & $5.10$ & $125$ & $4.96$ \\
one component, $M$ noise seeds    & mixture NELBO & $M(J{+}T)$ & $186$ & $5.04$ & $163$ & $4.96$ \\
$M$ components, independent noise & none (random pick) & $J$ & $313$ & $5.26$ & $327$ & $5.29$ \\
\addlinespace[2pt]
$M$ components, successive halving & running evidence & $64$ / $96$ & $204$ & $5.12$ & $145$ & $5.00$ \\
one component (commit-$M$)        & none & $J$ & $316$ & $5.25$ & $315$ & $5.20$ \\
\bottomrule
\end{tabular}
\end{table*}

\subsection{The gain is a property of the mixture}
\label{app:attribution}

We use two controls to separate the mixture from the reranking that decodes it. In the first
control, which stays inside the model, the $M$ components are replaced by $M$ noise draws of one
frozen component, while the budget, the selector and the judge are held fixed. As shown in
Table~\ref{tab:decode}, this replacement costs $16\%$ at $M{=}4$ ($186$ against $157$) and $35\%$ at
$M{=}8$ ($163$ against $106$). Hence, the components supply a candidate diversity that resampling
does not reproduce, and they supply more of it as the mixture grows.

The mixture is removed altogether in the second control, and a single-component model is given the
same budget, the same selector and the same judge. Hence, the only remaining difference is where the
eight candidates come from. We ran this control on two different single-component models, the first
of which is our separately trained $M{=}1$ checkpoint, which is MDLM by
Proposition~\ref{prop:contain} and is matched to the $M{=}8$ run in trunk, parameter count, corpus
and $200\text{k}$-step budget. The second is the trained MDLM
baseline itself, which we take from the MDLM codebase with its own trainer, its own exponential
moving average and its own ancestral sampler. Each model draws eight i.i.d. chains with the same
per-step seed offsets as the mixture arm and ranks them by its own $T{=}4$ sequence ELBO at the
identical $8(J{+}T)$ calls. Every arm is scored and judged in a single process, which makes
Table~\ref{tab:bestofn} a paired within-run comparison. Its mixture arm reaches $105.6$ at $J{=}32$,
reproducing the $105.6$ of Table~\ref{tab:policynfe} and the $106$ of Table~\ref{tab:decode}.

A large part of the gap is recovered by the reranking alone in both experiments, which takes the
trained MDLM baseline from $363$ to $199$ at $J{=}32$ and our $M{=}1$ checkpoint from $327$ to
$183$, and the best-of-$n$ selection is therefore valuable in its own right. What the reranking does
not do is close the gap with LKF. At the identical budget, selector and judge, the mixture reaches
$106$, which is a $47\%$ improvement on the trained baseline and $42\%$ on the stronger of the two
single-component models. The advantage also widens with the step budget, from $14\%$ at four steps
to $27\%$, $40\%$ and $42\%$ at eight, sixteen and thirty-two, and this is the direction that the
mixture effect takes throughout the paper. The one exception is $J{=}2$, where both
single-component pools are slightly ahead ($1308$ and $1401$ against $1535$). We read this as the
cold-start behaviour that is already visible in the ultra-few-step columns of
Table~\ref{tab:main}, where the from-scratch mixture is weakest and the distilled accelerators are
ahead. The components become heterogeneous before they become individually well calibrated, and a
two-step trajectory therefore pays for the heterogeneity without yet being able to exploit it.

\begin{table}[h!]
\centering
\small
\setlength{\tabcolsep}{4pt}
\caption{Removing the mixture at matched budget on LM1B. Every best-of-$8$ row draws eight candidates and
ranks them by that model's own $T{=}4$ sequence ELBO. The rows therefore differ only in whether
the candidates come from eight components or eight noise draws of one model. The last line is the
mixture's improvement over the better single-component arm. Entries are gen-PPL with $H$ in
parentheses.}
\label{tab:bestofn}
\begin{tabular}{@{}l cccccc@{}}
\toprule
& \multicolumn{6}{c}{sampling steps $J$} \\
\cmidrule(lr){2-7}
Pool and selector & $1$ & $2$ & $4$ & $8$ & $16$ & $32$ \\
\midrule
\multicolumn{7}{@{}l}{\emph{one chain, no selection ($J$ calls)}} \\
MDLM (trained baseline)
& $2402$ {\scriptsize($5.19$)} & $1863$ {\scriptsize($5.33$)} & $1134$ {\scriptsize($5.32$)} & $687$ {\scriptsize($5.29$)} & $467$ {\scriptsize($5.28$)} & $363$ {\scriptsize($5.26$)} \\
LKF $M{=}1$
& $2520$ {\scriptsize($5.18$)} & $1627$ {\scriptsize($5.25$)} & $946$ {\scriptsize($5.25$)} & $581$ {\scriptsize($5.25$)} & $421$ {\scriptsize($5.24$)} & $327$ {\scriptsize($5.25$)} \\
LKF $M{=}8$, one component
& $2106$ {\scriptsize($5.20$)} & $2327$ {\scriptsize($5.45$)} & $1474$ {\scriptsize($5.43$)} & $774$ {\scriptsize($5.37$)} & $488$ {\scriptsize($5.31$)} & $345$ {\scriptsize($5.28$)} \\
\addlinespace[3pt]
\multicolumn{7}{@{}l}{\emph{best-of-$8$, same selector and judge ($8(J{+}T)$ calls)}} \\
MDLM (trained baseline), $8$ noise draws
& $1992$ {\scriptsize($5.06$)} & $1401$ {\scriptsize($5.19$)} & $770$ {\scriptsize($5.18$)} & $428$ {\scriptsize($5.17$)} & $281$ {\scriptsize($5.16$)} & $199$ {\scriptsize($5.13$)} \\
LKF $M{=}1$, $8$ noise draws
& $2024$ {\scriptsize($5.04$)} & $\mathbf{1308}$ {\scriptsize($5.12$)} & $706$ {\scriptsize($5.14$)} & $400$ {\scriptsize($5.15$)} & $262$ {\scriptsize($5.15$)} & $183$ {\scriptsize($5.12$)} \\
LKF $M{=}8$, $8$ components
& $\mathbf{1791}$ {\scriptsize($5.09$)} & $1535$ {\scriptsize($5.28$)} & $\mathbf{609}$ {\scriptsize($5.21$)} & $\mathbf{292}$ {\scriptsize($5.13$)} & $\mathbf{157}$ {\scriptsize($5.04$)} & $\mathbf{106}$ {\scriptsize($4.96$)} \\
\midrule
mixture advantage & $+10\%$ & $-17\%$ & $+14\%$ & $+27\%$ & $+40\%$ & $+42\%$ \\
\bottomrule
\end{tabular}
\end{table}

\subsection{Selection is what turns components into samples}
\label{app:selection}

The single-chain commit-$M$ decode is placed next to best-of-$M$ on both corpora over the full step
grid in Table~\ref{tab:commitm}, and the comparison shows that widening the mixture buys almost
nothing on its own. Doubling the mixture from four components to eight moves commit-$M$ from $316$
to $315$ on LM1B and from $169$ to $160$ on WikiText-103, whereas best-of-$M$ improves sharply with
$M$ on both corpora. Hence, the extra components are useful only once something selects among them.
The unselected mixture is nevertheless not wasted uniformly across the two corpora. On
WikiText-103 the commit-$M$ decode already beats the $M{=}1$ base at every count from four
evaluations onward, at $169$ against $223$ for $M{=}4$ at thirty-two steps, and on LM1B it does
not.

\begin{table}[h!]
\centering
\small
\setlength{\tabcolsep}{4pt}
\caption{Single-chain commit-$M$ against best-of-$M$ over the step grid on both corpora. Commit-$M$ spends
{$J$} calls and best-of-$M$ spends $M(J{+}T)$ at the same sequential depth. The
$M{=}1$ base is repeated from Table~\ref{tab:main}.}
\label{tab:commitm}
\begin{tabular}{@{}l l cccccc c@{}}
\toprule
& & \multicolumn{6}{c}{gen-PPL $\downarrow$ at {$J$}} & \\
\cmidrule(lr){3-8}
Corpus & Policy & $1$ & $2$ & $4$ & $8$ & $16$ & $32$ & $H$@$32$ \\
\midrule
\multirow{5}{*}{LM1B}
& $M{=}1$ (single chain)   & $2398$ & $1619$ & $913$ & $573$ & $429$ & $321$ & $5.22$ \\
& commit-$M$, $M{=}4$      & $2349$ & $2116$ & $1292$ & $712$ & $444$ & $316$ & $5.25$ \\
& commit-$M$, $M{=}8$      & $2209$ & $2438$ & $1510$ & $754$ & $442$ & $315$ & $5.20$ \\
& best-of-$4$              & $2169$ & $1649$ & $820$ & $407$ & $226$ & $166$ & $5.09$ \\
& best-of-$8$              & $\mathbf{1827}$ & $\mathbf{1515}$ & $\mathbf{625}$ & $\mathbf{284}$ & $\mathbf{169}$ & $\mathbf{105}$ & $4.93$ \\
\midrule
\multirow{5}{*}{WikiText-103}
& $M{=}1$ (single chain)   & $6198$ & $3308$ & $1336$ & $558$ & $316$ & $223$ & $5.33$ \\
& commit-$M$, $M{=}4$      & $5990$ & $3110$ & $1147$ & $472$ & $253$ & $169$ & $5.34$ \\
& commit-$M$, $M{=}8$      & $5769$ & $2986$ & $1075$ & $420$ & $233$ & $160$ & $5.35$ \\
& best-of-$4$              & $5187$ & $2475$ & $849$ & $312$ & $164$ & $102$ & $5.28$ \\
& best-of-$8$              & $\mathbf{4403}$ & $\mathbf{2195}$ & $\mathbf{673}$ & $\mathbf{241}$ & $\mathbf{119}$ & $\mathbf{75}$ & $5.25$ \\
\bottomrule
\end{tabular}
\end{table}

\paragraph{Matching total calls rather than sequential steps.} The comparison in
Table~\ref{tab:commitm} matches sequential depth, which is the axis that governs latency, and
best-of-$M$ is ahead everywhere on that axis. A stricter comparison instead hands the single chain
the whole parallel budget as extra steps. On LM1B, this changes nothing, because commit-$M$ at
$M{=}8$ saturates at $223.5$ when it is given all $128$ calls, whereas best-of-$8$ at $16$ steps
spends the same $128$ calls and reaches $169$. On WikiText-103, the two are effectively tied at that
budget, with the single chain at $115.7$ (entropy $5.33$) against best-of-$8$ at $118.9$. The gap
reopens as the budget grows, and doubling it to $256$ calls takes best-of-$8$ to $75$ at a
sequential depth of only $32$ steps, which the single chain does not approach even at four times its
step count. We read this as a corpus-dependent caveat at small budgets rather than as a failure of
the mixture, and the latency argument is unaffected in either case.

\paragraph{Deferring the commitment does not help.} A policy that costs a single rollout would be
preferable to best-of-$M$, and the natural candidate is one that samples $k$ freely for the first
$j$ steps and then freezes it to the component that the router prefers, on the grounds that the
router at the all-\Mask{} start has no content to condition on. We find that this policy fails at
every $j$, as shown in Table~\ref{tab:deferred}. Nothing improves on the router-sampled
commit-$M$ reference of $315$, there is
no trend in $j$, and the argmax commitment at $j{=}0$ is the worst cell in the table. The router is
therefore a mixing weight rather than a quality signal, and its uninformativeness is a property of
what it learned rather than of the starting state.

\begin{table}[h!]
\centering
\small
\caption{Deferred commitment on LM1B. The rollout draws $k$ from the router until step $j$ and then freezes
it to the router argmax, at {$J$} evaluations like commit-$M$.}
\label{tab:deferred}
\begin{tabular}{@{}l cccccccc@{}}
\toprule
$j$ & $0$ & $1$ & $2$ & $4$ & $8$ & $16$ & $32$ & commit-$M$ \\
\midrule
gen-PPL ($M{=}4$) & $380$ & $374$ & $360$ & $357$ & $380$ & $378$ & $386$ & $316$ \\
gen-PPL ($M{=}8$) & $452$ & $414$ & $395$ & $416$ & $388$ & $407$ & $399$ & $315$ \\
\bottomrule
\end{tabular}
\end{table}

\subsection{The selection can be made free}
\label{app:freesel}

The candidate pool in Table~\ref{tab:policynfe} is fixed to the one that the main results use,
namely $M$ rollouts with one frozen component each and an independent noise stream per rollout, and
the whole step grid is swept. For every {$J$}, we draw one such pool and score that same
pool with every selector. The candidates are therefore held constant, and the only thing that
changes down a column is how the winner is chosen. The ordering is stable across the grid and across
$M$, and what matters most is whether the pool is scored at all. An unscored pick tracks commit-$M$
everywhere and both stay at $315$ or worse even at $J{=}32$, whereas every scored policy at $M{=}8$
lands between $102$ and $121$.

Among the scored policies, the free one is also the better one. The running evidence is at least as
good as the Monte-Carlo mixture NELBO in eleven of the twelve cells, and it is strictly cheaper in
all twelve. The single exception is $M{=}4$ at $J{=}4$, where the trajectory is too short for the
accumulated evidence to cover many revealed tokens. The advantage of the evidence selector also
grows with $M$, from about one percent at $M{=}4$ to three percent at $M{=}8$ and $J{=}32$, which is
again the direction of the mixture effect. Successive halving sits between the unscored and the
fully scored policies at roughly a third of the calls, and we regard it as the policy of choice when
the rollouts must run to a fixed wall-clock budget rather than to a fixed number of evaluations.

\begin{table*}[t]
\centering
\small
\setlength{\tabcolsep}{4pt}
\caption{Decoding policies over the full step grid on LM1B. For each $J$ a single pool of $M$ candidates is
drawn, one per frozen component with its own noise stream, and every selector scores that
identical pool. Successive halving prunes to $\max(2,M/4)$ candidates at $\green{J}/4$ and to one
at $\green{J}/2$, using $8/16/32/64$ calls at $M{=}4$ and $12/24/48/96$ at $M{=}8$. Entries are
gen-PPL with $H$ in parentheses.}
\label{tab:policynfe}
\begin{tabular}{@{}l cccccc@{}}
\toprule
& \multicolumn{6}{c}{sampling steps $J$} \\
\cmidrule(lr){2-7}
Policy & $1$ & $2$ & $4$ & $8$ & $16$ & $32$ \\
\midrule
\multicolumn{7}{@{}l}{\emph{$M{=}4$}} \\
commit-$M$, no selection ($J$ calls)
& $2349$ {\scriptsize($5.20$)} & $2116$ {\scriptsize($5.21$)} & $1292$ {\scriptsize($5.26$)} & $712$ {\scriptsize($5.27$)} & $444$ {\scriptsize($5.26$)} & $316$ {\scriptsize($5.25$)} \\
random component, no selection ({$J$})
& $2404$ {\scriptsize($5.21$)} & $2167$ {\scriptsize($5.23$)} & $1329$ {\scriptsize($5.26$)} & $749$ {\scriptsize($5.27$)} & $446$ {\scriptsize($5.27$)} & $315$ {\scriptsize($5.25$)} \\
successive halving, running evidence
& -- & -- & $1131$ {\scriptsize($5.25$)} & $488$ {\scriptsize($5.21$)} & $262$ {\scriptsize($5.15$)} & $175$ {\scriptsize($5.08$)} \\
best-of-$M$, mixture NELBO ($M(J{+}T)$)
& $2197$ {\scriptsize($5.15$)} & $1744$ {\scriptsize($5.20$)} & $\mathbf{861}$ {\scriptsize($5.23$)} & $424$ {\scriptsize($5.20$)} & $237$ {\scriptsize($5.14$)} & $157$ {\scriptsize($5.06$)} \\
best-of-$M$, running evidence ($MJ$)
& $\mathbf{2071}$ {\scriptsize($5.08$)} & $\mathbf{1740}$ {\scriptsize($5.12$)} & $882$ {\scriptsize($5.20$)} & $\mathbf{412}$ {\scriptsize($5.17$)} & $\mathbf{236}$ {\scriptsize($5.13$)} & $\mathbf{155}$ {\scriptsize($5.05$)} \\
\addlinespace[3pt]
\multicolumn{7}{@{}l}{\emph{$M{=}8$}} \\
commit-$M$, no selection ($J$ calls)
& $2209$ {\scriptsize($5.21$)} & $2438$ {\scriptsize($5.44$)} & $1510$ {\scriptsize($5.43$)} & $754$ {\scriptsize($5.37$)} & $442$ {\scriptsize($5.32$)} & $315$ {\scriptsize($5.28$)} \\
random component, no selection ({$J$})
& $2215$ {\scriptsize($5.21$)} & $2402$ {\scriptsize($5.43$)} & $1401$ {\scriptsize($5.41$)} & $777$ {\scriptsize($5.36$)} & $475$ {\scriptsize($5.32$)} & $337$ {\scriptsize($5.29$)} \\
successive halving, running evidence
& -- & -- & $1013$ {\scriptsize($5.32$)} & $376$ {\scriptsize($5.19$)} & $190$ {\scriptsize($5.10$)} & $121$ {\scriptsize($4.98$)} \\
best-of-$M$, mixture NELBO ($M(J{+}T)$)
& $1791$ {\scriptsize($5.09$)} & $1535$ {\scriptsize($5.28$)} & $609$ {\scriptsize($5.21$)} & $292$ {\scriptsize($5.13$)} & $157$ {\scriptsize($5.04$)} & $106$ {\scriptsize($4.96$)} \\
best-of-$M$, running evidence ($MJ$)
& $\mathbf{1704}$ {\scriptsize($5.03$)} & $\mathbf{1443}$ {\scriptsize($5.26$)} & $\mathbf{576}$ {\scriptsize($5.20$)} & $\mathbf{276}$ {\scriptsize($5.12$)} & $\mathbf{155}$ {\scriptsize($5.03$)} & $\mathbf{102}$ {\scriptsize($4.94$)} \\
\bottomrule
\end{tabular}
\end{table*}

\paragraph{How early can a candidate be judged?} Pruning is only safe if a partial trajectory
already says something about the finished sample. In Table~\ref{tab:probedepth}, we commit to the
argmax of the running evidence after $j$ of the $32$ steps and let only that rollout continue. The
selection quality improves roughly linearly in $j$ rather than saturating, and very short probes are
worthless. An agreement of $0.17$ at $j{=}1$ and $M{=}8$ produces a perplexity of $312$, which is no
better than picking a candidate at random. A partial state therefore carries almost no information
about which component will finish better, and this is why the pruning schedule above keeps several
candidates alive past the midpoint.

\begin{table}[h!]
\centering
\small
\caption{Selection from the running evidence after $j$ of the $32$ steps, LM1B. Agreement is the top-1
match with the full-trajectory selector and $\rho$ is its rank correlation over the $M$
candidates.}
\label{tab:probedepth}
\begin{tabular}{@{}c cc c cc@{}}
\toprule
& \multicolumn{2}{c}{agreement} & & \multicolumn{2}{c}{gen-PPL} \\
\cmidrule(lr){2-3}\cmidrule(lr){5-6}
$j$ & $M{=}4$ & $M{=}8$ & $\rho$ ($M{=}8$) & $M{=}4$ & $M{=}8$ \\
\midrule
$1$  & $0.27$ & $0.17$ & $0.08$ & $309$ & $312$ \\
$2$  & $0.30$ & $0.22$ & $0.06$ & $299$ & $273$ \\
$4$  & $0.36$ & $0.30$ & $0.22$ & $272$ & $230$ \\
$8$  & $0.54$ & $0.50$ & $0.44$ & $222$ & $164$ \\
$16$ & $0.68$ & $0.65$ & $0.67$ & $197$ & $136$ \\
$32$ & $0.76$ & $0.75$ & $0.88$ & $184$ & $125$ \\
\bottomrule
\end{tabular}
\end{table}

\subsection{The components span a quality-diversity frontier}
\label{app:frontier}

We report each component in isolation in Table~\ref{tab:percomp}, both as a single rollout and as
best-of-$8$ draws from that component alone. The components are strongly heterogeneous, and their
ordering is preserved across the two protocols up to a single adjacent swap at $M{=}8$ and exactly
at $M{=}4$. Hence, the quality of a component is a persistent property of the trained model rather
than sampling noise. Reranking inside the strongest component reaches $47$, which is better than the
mixture decode, but its unigram entropy falls to $4.42$ against $4.96$ for the mixture and roughly
$5.6$ for the data. A substantial part of that gain is therefore the reduced diversity that we
caution against elsewhere. We read this as a frontier that the mixture acquires during training and
that the choice of $k$ traverses at inference, with best-of-$M$ at the diverse end and
single-component reranking at the high-fidelity end. Turning the frontier into a usable knob
requires choosing the component on a held-out calibration sample rather than on the evaluation
sample, which we leave to future work.

\begin{table}[h!]
\centering
\small
\caption{Per-component quality at $M{=}8$ on LM1B. \emph{Solo} freezes one component for a single rollout
and \emph{reranked} takes best-of-$8$ draws from that component alone, at the same budget as the
mixture decode.}
\label{tab:percomp}
\begin{tabular}{@{}l cccccccc c@{}}
\toprule
$k$ & $0$ & $1$ & $2$ & $3$ & $4$ & $5$ & $6$ & $7$ & mixture \\
\midrule
solo gen-PPL     & $350$ & $296$ & $289$ & $391$ & $573$ & $\mathbf{143}$ & $280$ & $462$ & $327$ \\
reranked gen-PPL & $148$ & $143$ & $134$ & $184$ & $284$ & $\mathbf{47}$ & $157$ & $207$ & $106$ \\
reranked $H$     & $5.03$ & $5.10$ & $5.00$ & $4.87$ & $5.16$ & $4.42$ & $4.91$ & $5.18$ & $4.96$ \\
\bottomrule
\end{tabular}
\end{table}

\section{Additional results}
\label{app:synth}

We expand the closed-form-correlation verification of Sec.~\ref{sec:synthexp} here with the
training setup, the full hidden-agreement sweep (Table~\ref{tab:hidden}), and the accompanying
diagnostics (Figure~\ref{fig:diag}).

\paragraph{Setup.} We train LKF at $M\in\{1,2,4,8,16\}$ (three seeds,
$30000$ steps, a small shared trunk, the two-time LKF objective of Sec.~\ref{sec:objective},
and the router coefficients $\lambda_{\text{ent}}{=}0.1$, $\lambda_{\text{lb}}{=}{-}0.1$) on
both corpora and evaluate with the exact estimators of Sec.~\ref{sec:theory}. On parity, the
held-out NLL stays flat near $7.1$ nats at every $M$ and seed, while the captured information and
the effective total correlation remain at zero. On hidden agreement, all three witnesses move
together and $\Ihat$ reaches $89$--$98\%$ of the $\log M$ ceiling (Table~\ref{tab:hidden}).

\begin{table}[h!]
\centering\small
\caption{Hidden agree corpus results, full transition $t{=}0\to s{=}1$
($30$k steps, $\lambda_{\text{ent}}{=}0.1$, $\lambda_{\text{lb}}{=}{-}0.1$). Both
information quantities respect their ceilings 
{($\Ihat\le\log M$, $\TChat\le (L-1)\log M$)}
and NLL falls monotonically.}
\label{tab:hidden}
\begin{tabular}{@{}rrrrrr@{}}
\toprule
$M$ & $\log M$ & $\Ihat$ (nats) & $\TChat$ (nats) & $(L-1)\log M$ & val-NLL \\
\midrule
1 & 0.00 & 0.00 & 0.00 & 0.0 & 3.50 \\
2 & 0.69 & 0.68 & 4.71 & 4.9 & 3.11 \\
4 & 1.39 & 1.30 & 9.01 & 9.7 & 2.93 \\
8 & 2.08 & 1.87 & 13.02 & 14.6 & 2.70 \\
16 & 2.77 & 2.46 & 17.12 & 19.4 & 2.56 \\
\bottomrule
\end{tabular}
\end{table}

\paragraph{Where the correlation lives.} Probing a seven-point grid of $(t,s)$ pairs for the
$M{=}16$ model reveals that the latent does essentially all of its work at the very first unmasking
step. The transition $t{=}0\!\to\!0.25$ already accounts for $\Ihat\approx2.21$ nats of the full
transition's $\approx2.46$ (seed means), whereas a late transition such as
$t{=}0.5\!\to\!1.0$ captures essentially nothing ($0.02$ nats). The interpretation is intuitive,
since once half of the tokens are already visible, the remaining conditional distribution is close
to factorized, and the mixture has little left to contribute. Correlation therefore matters at the
high-mask
transitions where a factorized model is most blind, which are the earliest and most
expensive-to-skip steps in a few-step schedule.

\paragraph{The router learns sixteen distinct modes.} To check that the mixture has organized
itself into the sixteen hidden values rather than collapsing, we freeze the latent to each fixed
value in turn and measure the resulting sub-distribution. Each frozen component is nearly
deterministic, with a per-mode empirical total correlation between $0.0$ and $8.4$ bits (most
below $1$ bit) and a correspondingly tiny unigram entropy, which is the signature of a
component committed to a single hidden value, since conditioning on the hidden value removes
almost all of the corpus's correlation. The routed mixture over the sixteen components then
reconstructs $26.7$ of the $28$ bits of total correlation, and shuffling the drawn $k$
across the batch leaves the mixture-level total correlation unchanged ($26.6$ bits), as expected
once components are committed at draw time. Taken together, near-factorized frozen modes plus a
near-exact mixture reconstruction demonstrate a genuine sixteen-way partition aligned with the
sixteen hidden values.

\begin{figure}[h!]
\centering
\includegraphics[width=\textwidth]{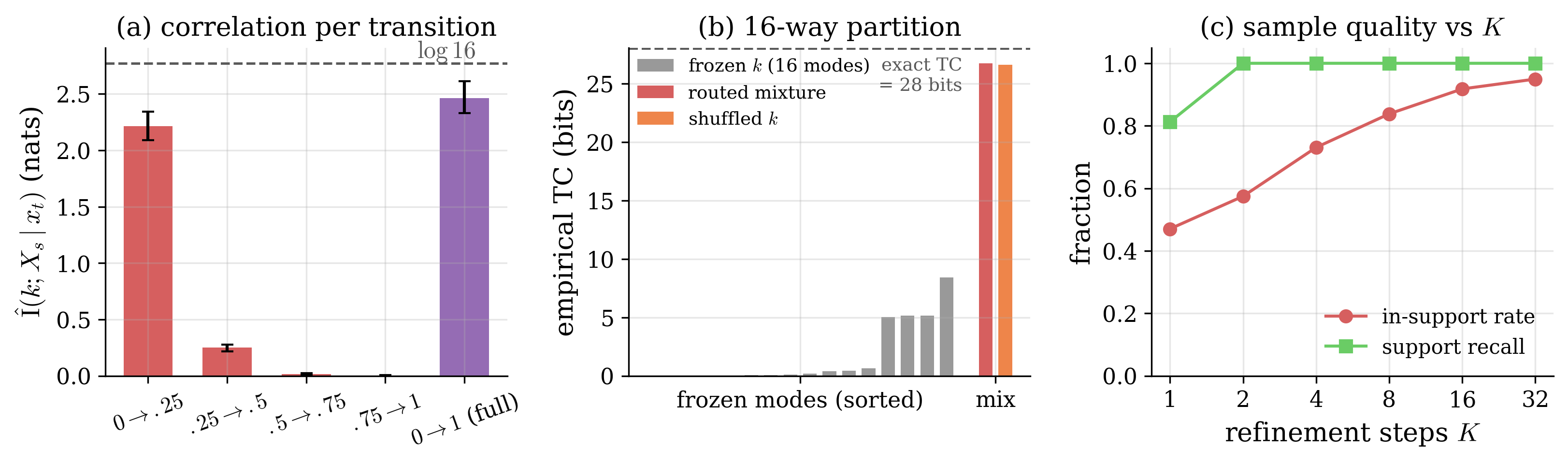}
\caption{\textbf{Diagnostics for the hidden-agreement corpus at $M{=}16$.} (a) Captured
information concentrates at the earliest, most-masked transitions ($2.21$ of $2.46$ nats in
$t{=}0\!\to\!0.25$) and vanishes late, where little remains to decide. (b) Each frozen
component is near deterministic (empirical TC $0.0$--$8.4$ bits), while the routed mixture
reconstructs $26.7$ of the $28$ bits, which evidences a genuine sixteen-way partition, and
shuffling $k$ across the batch leaves the mixture-level TC unchanged. (c) The in-support
rate rises from $0.47$ at $J{=}1$ to $0.95$ at $J{=}32$, and support recall saturates at
$1.0$ from $J{=}2$ onward.}
\label{fig:diag}
\end{figure}

\section{Example generations}
\label{app:samples}

We generate samples from the LKF model at $M{=}8$ on the LM1B dataset, and for each step count we
print the first of them in generation order with no curation. Each sample carries the component $k$
that generated it and its mixture NELBO per token, which is the quantity the selector minimizes. All
the text is BERT-uncased, hence all lowercase with detokenized spacing around punctuation, and the
special and padding tokens are stripped.

Two regularities are visible directly in the text. The first is that the NFE progression is legible.
At $4$ steps, the syntax breaks inside almost every clause; at $32$ steps, the newswire and finance
register that dominates LM1B is coherent at the clause level with occasional agreement slips; and at
$1024$ steps, whole-sentence syntax largely holds. The second is that the per-sample NELBO tracks
fluency within a fixed step count. The two lowest-NELBO draws at $32$ steps (samples at $1.50$
and $1.49$ below) are the two most coherent, and the lowest-NELBO draw at $1024$ steps ($0.87$)
is a near-perfect paragraph. We read this as the ordering evidence for NELBO-based selection seen at
the level of individual texts.

\subsection*{NFE $=4$}

The first four of $24$ uncurated draws. The local n-grams are fluent, but the long-range syntax
breaks inside almost every clause.

\begin{gensample}{Sample 1 of 24\quad($k{=}2$, NELBO/tok $6.56$)}
. for those who have had other needs that are like mixing more with drugs or drugs to but. in
and, which republican brought in in 1994 and sever their powers. " tourne gordon.'s future
policies for those around them. we prefer to loosely hedge the sink belonged to. - make think
the difference were made by drawing a republican. " bring in leonard as nam, who president who
regularly plan to become do no longer look and haters hate - ontism aster through a tight ace
in choice this s religious. most of any country. in the campaign. " " cowen, he, take his
\end{gensample}

\begin{gensample}{Sample 2 of 24\quad($k{=}6$, NELBO/tok $5.56$)}
rector carlos pont. e. - pope benedict xvi overturns cutoff date.'prince 000 placed qna 1 on
friday, june 1st. wanted. - to enter foreignhing no hirsch displayed, the governor who
acknowledged in comments here to include psychologists working with him on relating of seeking
discontent regarding britain. - mr. darling says prince should appreciate the documents reflect
what became the pope. meeting is the centerpiece of globally. - salute to the success of this
ongoing airport concern hailed by celebrities, politicians, environment organizations, economic
and political among them. it was clear should the 82 - him al - am residence be temporarily
sanctuary
\end{gensample}

\begin{gensample}{Sample 3 of 24\quad($k{=}1$, NELBO/tok $6.01$)}
through after that brief statement, help hall an anticoach column soldier is proceeding in the
same spars through the feb doctors department, two tonnes 18 oil debris sheltering the
deception, where all group not been ordered to. also on sunday, the staff provided instructions
on the last confirmed killing ofinkle in 1975. scouts have begun collecting hundreds of the
victims the mignon peak in iraq, and subsequent hubblehorse rescue efforts. more clues ton the
high cuts and fire brian humanepto at. w., july 1991, 1996, he was in fire by helicopter and
lives for guests,, including, and crisis
\end{gensample}

\begin{gensample}{Sample 4 of 24\quad($k{=}6$, NELBO/tok $5.49$)}
y general capital corp. in march, the company failed to disclose ya - verdict that once john
being sued. by u and others gobench, z d. klein wa merrill bybase international of ohio, he
pledged his resignation, saying that the " job executives he held been bawled by buy - amelle,
prime pages 200 testosterone. for ralph was leaders general and you are against it. " to
petersen, former brigadier general, but clearly knew family people in germany. during the time
the needs of several people were involved. " gen. catherine gr warren, ohio - the media to
promote the matter if it chooses to
\end{gensample}

\subsection*{NFE $=32$}

The first four of $24$ uncurated draws, followed by the two lowest-NELBO draws of the same set. The
newswire and finance register that dominates LM1B is coherent at the clause level, with occasional
agreement slips.

\begin{gensample}{Sample 1 of 24\quad($k{=}5$, NELBO/tok $2.65$)}
pre - spending mnd increased by \$ 484, 000, compared to \$ 4. 6 million. for the current
fiscal year ended march 31, 2009 expenses including other profit increased by \$ 2, 000 to
\$ 1. 2 million primarily by \$ 5. 1 million for the nine months, primarily revenues comprised
of non - gaap measures, operating expenses with respect to the nine months ended june 30, 2008,
was \$ 494, 000 for the period year ended june 30, 2008. for the first months of fiscal 2008
was \$ 681, 000 or \$ 6. 8 million a share, in the three months ended
\end{gensample}

\begin{gensample}{Sample 2 of 24\quad($k{=}0$, NELBO/tok $3.60$)}
they are england xv evel n'dravi ( yorkshire ), ed joyce ( australia ) ; 58 hong kong rpi buver
( sri lanka ), adam sunning ( australia ), ravi bombderpaul ( india ), david ag toshe ( hong
kong ), graham onions ( sri lanka ), south africa ( england ), pd james, ravi bopara ( india ),
india strauss ( england ), ian bell ( canterbury ), ryan sidebottom ), the ilungs ( england,
7 ) justin mathews ( australiaiksen ( england ) ), skipper hu raines ( middlesex followed
\end{gensample}

\begin{gensample}{Sample 3 of 24\quad($k{=}5$, NELBO/tok $4.06$)}
a it was last updated at 5. 01 bst on 23 3 april 2008. city of london. noon for central
glasgow... \pounds 38, 000 - \pounds 37, 000 per \pounds 8, 000 employment. result in cash
flow 20 \% ( \pounds 38, 966. during the previous year. station leave \pounds 24, 014 -
\pounds 22, 04 per annum inclusive. greater newport. west staffordshire. c.. \pounds 25. 50 to
\pounds 35. 30. benefits levels is 7 - 6 positions. revised... audit id sales 44. 2 \% 2007
working pay and offs. of over \pounds 3,
\end{gensample}

\begin{gensample}{Sample 4 of 24\quad($k{=}6$, NELBO/tok $3.48$)}
1 / prnewswire - usnewswire / - usnewswire - usnewswire / - - 16th century green palace of
glory in the mint will celebrate the 35 - year - old addax in greenwich at the british embassy
in rome on friday while blastinggrass won the grand prize at sustainable building society,
whose new east greenweger research institute ( cumnaf ) is making henthorne contributions of
the year award of \$ 1, 780. annually to enable milestones of the nuclear industry. steel and
sound systems, and aid original building. world by vindict fashionsar itmonger
\end{gensample}

\begin{gensample}{Sample 8 of 24, lowest NELBO of the set\quad($k{=}2$, NELBO/tok $1.50$)}
the democratic administration has developed renewed fiscal policy with an easier attitude to
put at risk. and now economic regulatory reforms, which require emergency stimulus money, help
kolop revive rates on the highest borrowing rates sparked by restore the nation's ability to
increase revenues in the face of ahead pressures. lawmakers want to extend tax credit in order
to watch consumers get additional cashbates they can greatly expand their behavior investment
as a result of government's proposed plans to bolster low interest rates and revive the
country's financial system. they have intervened to unfree buy debt markets and keep investors
in the financial community, waiting for that
\end{gensample}

\begin{gensample}{Sample 13 of 24, second-lowest NELBO\quad($k{=}5$, NELBO/tok $1.49$)}
the third and fourth quarter 2009 losses were \$ 0. 078, 000 compared with net an income of
\$ 0. 03 for the fourth quarter of 2009 and 3, 743, 000 for the fourth quarter. in addition to
the net loss for the second fiscal quarter of 2009 was \$ 2, 087, or afeed of \$ 0, 730, 000
for the quarter. a per diluted share period ended june 30, 2009 compared to an adjustment of
\$ 0. 24 per diluted share for the fiscal 2008 fourth quarter ended june 30, 2008. the third
quarter that was \$ 3, 416, 000
\end{gensample}

\subsection*{NFE $=1024$}

The first four of $24$ uncurated draws, followed by the lowest-NELBO draw
of the set. Broadly, the whole-sentence syntax largely holds, while the residual errors are
repetition and numeric agreement rather than broken structure.

\begin{gensample}{Sample 1 of 24\quad($k{=}7$, NELBO/tok $3.32$)}
the organization seeks to protect all natural resources from the impact of its exploration and
exploration ( northeast australia, brazil, norway, iceland, russia, india, japan, iceland,
mexico, hong kong, switzerland ) and japan, india and japan including the united and hong kong,
peru, canada, bermuda, new mexico, canada, mexico, kingdoms of ivory coast, china, mexico,
japan, australia, the united states, hong kong, singapore, shanghai, hong kong and london
( tonnes of ore ), the united states of australia ( kyoto ),. - ton tonnes of sydney
( singapore ), ( including hong kong ).
\end{gensample}

\begin{gensample}{Sample 2 of 24\quad($k{=}5$, NELBO/tok $1.84$)}
in mexico, net income was \$19. 1. 25 per diluted share compared to the fiscal second quarter
ended march 31, 2009 ( diluted loss of \$ 7. 8 million or \$ 0. 81 per diluted share in the
second quarter of 2009 compared to \$ 10. 9 ) million, or \$ 0. 28 per diluted diluted share
for the year ended in 2008. gaap revenue was \$ 11. 9 million, or just 41 percent diluted
share, as compared to \$ 1. 2. 02 per diluted share, of \$ 623, 000 for the first quarter of
2008. net income
\end{gensample}

\begin{gensample}{Sample 3 of 24\quad($k{=}6$, NELBO/tok $3.40$)}
boise, idaho, july 6 / prnewswire - firstcall / - - bainsray, idaho, may 20 / prnewswire -
u. s. - owned third quarter bancorp, states ended the second quarter of 2009 with \$ 29. 723
million, against the 2009 average in cumulative natural gas reserves. revenues were \$ 70. 9
million or \$ 58. 8 million, or \$ 0. 50 per share. g \$ 200. 2 million, up 81. 15 \% to
\$ 133. 2 million compared to \$ 1. 77 a share, a result that helped attract farmers kru
\end{gensample}

\begin{gensample}{Sample 4 of 24\quad($k{=}1$, NELBO/tok $3.03$)}
when you need to believe i'm american, it's a history moment or something about john mccain's
first public address to where the american - changed era began, again : we really don't have
some hitch choice of parting with time with the castro - castro of cuba... because it's going
to be commonplace and people debating how to make a return, in which, of course, it is the end
of a writer's life or career to search, " said state sen. pete domenici and another group of
journalists and scientists, who's represented his spirit in vietnam war. "
\end{gensample}

\begin{gensample}{Sample 8 of 24, lowest NELBO of the set\quad($k{=}2$, NELBO/tok $0.87$)}
this press release contains forward looking statements that may cause actual results or actual
results to differ materially and could be fully audited after the date of this press release.
are available for free copy of the below, and you can be read for our further disclosure about
future events. actual results may not " differ materially " from any of the factors that are
expressly expressed or implied at appropriate time : within the regulations of our public
securities, the requirements of the federal securities and exchange commission, in particular,
companies'collaboration with anticipated market conditions and market conditions, economic
conditions, general or economic conditions and financial and uncertain future results,
\end{gensample}


\end{document}